\documentclass[10pt,reqno,twoside]{article}

%>>>>>>> DO NOT EDIT MACRO FILE%
%\usepackage{algorithmic}
%\usepackage{graphicx}
%\usepackage{placeins}
%\usepackage{subcaption}

%>>>>>>> RENAME CURRENT FILE TO MATCH LECTURE NUMBER
% E.g., "lecture_01.tex"
\usepackage[margin=0.8in]{geometry}

%>>>>>>> LECTURE NUMBER AND TITLE
\title{Learning Exponential Family Graphical Models with Latent Variables using
Regularized Conditional Likelihood} % CORRECT TITLE
% TIP:  Use "\\" to break the title into more than one line.
\author{Armeen Taeb $^{a}$, Parikshit Shah $^{b}$, and Venkat Chandrasekaran $^c$
\thanks{Correspondence email: armeen.taeb@stat.math.ethz.ch} \vspace{.25in} \\ $^a$ Department of Mathematics, ETH Z{\"u}rich \\  $^b$ Wisconsin Institutes for Discovery at the University of Wisconsin, Madison \\ $^c$  Department of Computing and Mathematical Sciences $\&$ of Electrical Engineering, Caltech}
\date{}
%>>>>>>> NAME OF SCRIBE(S) AND EDITOR(S)
%\author{%
  %Scribe:& 
  %Group Names  % >>>>> SCRIBE NAME(S)
%}

\makeatletter\@addtoreset{section}{part}\makeatother%

%\addto\captionsenglish{\renewcommand{\partname}{Lecture}}

\newcommand{\upperRomannumeral}[1]{\uppercase\expandafter{\romannumeral#1}}

\renewcommand{\hat}{\widehat}
\usepackage{cite} 
\usepackage[numbers]{natbib}
\usepackage{siunitx}
\usepackage{parskip}
\usepackage{graphicx}
\usepackage{hyperref}

\newcommand{\Proj}{\mathcal{P}}

\newcommand{\R}{\mathbb{R}}

\newcommand{\Sym}{\mathbb{S}}

\renewcommand{\Delta}{\triangle}

\usepackage{amssymb,amsmath,amsthm}

 % no space, limits underneath in displays
 % no space, limits underneath in displays
\setlength\parindent{12pt}
\newtheorem{theorem}{Theorem}

\usepackage{enumitem}

\usepackage{verbatim}

\usepackage{titlesec}

\usepackage{titlesec}
\usepackage{lipsum}% just to generate text for the example

\numberwithin{equation}{section}

\titlespacing*{\section}
{0pt}{2ex}{1.1ex plus .2ex}\titlespacing{\subsubsection}{6pt}{\parskip}{-\parskip}\titlespacing*{\subsection}
{0pt}{2ex}{1.1ex plus .2ex}\titlespacing{\subsubsection}{6pt}{\parskip}{-\parskip}
\usepackage{titlesec}
% Smaller margins
\usepackage{placeins}
\usepackage{subfigure}
% Update article type if known
%\papertype{Original Article}
% Include section in journal if known, otherwise delete
%\paperfield{Journal Section}

\date{October 19, 2020}

\begin{document}
\setlength{\abovedisplayskip}{15pt}
\setlength{\belowdisplayskip}{15pt}
\maketitle

\begin{abstract}
Fitting a graphical model to a collection of random variables given sample observations is a challenging task if the observed variables are influenced by latent variables, which can induce significant confounding statistical dependencies among the observed variables.  We present a new convex relaxation framework based on regularized conditional likelihood for latent-variable graphical modeling in which the conditional distribution of the observed variables conditioned on the latent variables is given by an exponential family graphical model.  In comparison to previously proposed tractable methods that proceed by characterizing the marginal distribution of the observed variables, our approach is applicable in a broader range of settings as it does not require knowledge about the specific form of distribution of the latent variables and it can be specialized to yield tractable approaches to problems in which the observed data are not well-modeled as Gaussian.  We demonstrate the utility and flexibility of our framework via a series of numerical experiments on synthetic as well as real data.\\[0.2in]
%\vspace{-0.1in}
% Please include a  of smaximumeven keywords
{\bf keywords:} convex optimization, equivariant estimators, exponential family PCA, pseudolikelihood, semidefinite programming
\end{abstract}

\section{Introduction}
Graphical models are multivariate statistical models that provide compact descriptions of joint probability distributions over large collections of variables in terms of products of local compatibility functions, each of which only involves a small number of the variables.  We consider an \emph{exponential family} of graphical models in $d$ variables in which the associated distributions factor as a product of functions of one or two variables (see  \cite{wrjordan} and the references therein) over a domain $\mathcal{X}^d \subseteq \R^d$ with ancillary statistic $h$:
\begin{equation} \label{eq:expfam}
\begin{aligned}
f(x; \alpha, \Theta) &\triangleq h(x) \exp\left\{\alpha' x - \tfrac{1}{2} x' \Theta x - \Phi(\alpha, \Theta) \right\} \\ \Phi(\alpha,\Theta) &\triangleq \int_{\mathcal{X}^d} \exp\left\{\alpha' x - \tfrac{1}{2} x' \Theta x \right\} h(x) d\nu(x).
\end{aligned}
\end{equation}
In the examples in this paper $\nu$ is either the Lebesgue measure or the counting measure (the integral in the definition of $\Phi$ is a sum if $\nu$ is the counting measure), and the product $h(x) d\nu(x)$ is also called the \emph{base measure}. The parameters $\alpha \in \R^d$ and $\Theta \in \Sym^d$ are the \emph{natural parameters} of the family, with $\Sym^d$ denoting the space of $d \times d$ real symmetric matrices.  The function $\Phi(\alpha, \Theta)$ is called the \emph{log-partition function} and it serves to normalize $f$.  The set of valid values for the parameters $\alpha, \Theta$ are those for which the log-partition is finite:
\begin{equation} \label{eq:validparam}
\mathcal{F} \triangleq \left\{ (\alpha, \Theta) \in \R^d \times \Sym^d ~|~ \Phi(\alpha, \Theta) < \infty \right\}.
\end{equation}
The set of \emph{valid parameters} $\mathcal{F}$ is a convex subset of $\R^d \times \Sym^d$, and over the domain $\mathcal{F}$ the log-partition function $\Phi$ is convex.  The number of parameters required to specify the distribution $f(x; \alpha, \Theta)$ is $\mathcal{O}(d^2)$, which can be prohibitive in problems with a large number of variables $d$.  Consequently, models in which $\Theta$ is a \emph{sparse} matrix are of great interest in applications.  Such sparse graphical models also have an appealing statistical interpretation as follows.  Given a distribution of the form \eqref{eq:expfam}, one can associate to it a graph consisting of $d$ nodes and edges between those pairs of nodes for which the corresponding $\Theta_{i,j} \neq 0$.  The Hammersley-Clifford theorem states that the random variables $x_i, x_j$ at two distinct nodes $i,j$ are independent conditioned on variables at all the other nodes if there is no edge between the nodes $i$ and $j$, i.e., there is no path between nodes $i$ and $j$ that does not pass through another node.  In this manner, the graph corresponding to the sparsity pattern of the matrix $\Theta$ encodes the conditional independence or Markov relations underlying the variables. 

Several variations of this basic family are possible, such as different types of compatibility functions or compatibility functions consisting of larger subsets of variables.  Our discussion can accommodate these extensions, but we stick with the model \eqref{eq:expfam} for notational simplicity.  Graphical models with random variables that are Gaussian ($\mathcal{X} = \R$) and Bernoulli ($\mathcal{X} = \{-1,+1\}$) are prominent examples of \eqref{eq:expfam}, and these are respectively called Gaussian graphical models and Ising models.  Further, graphical models that are appropriate for data specifying counts ($\mathcal{X} = \mathbb{Z}_+$) or data taking on positive values ($\mathcal{X} = \R_+$) may also be obtained as special cases of \eqref{eq:expfam} \cite{Besag,Genevera}; see Section~\ref{section:specialization} for details.

%via multivariate generalizations of the univariate exponential families associated to Poisson and exponential random variables, respectively

In data analysis problems in which the variables are indexed in an ordered fashion, there are usually reasonable choices for the underlying graph structure; for example, graphs based on nearest-neighbors are often used for specifying time series and spatial models.  However, in many applications, a natural choice for the graph structure is not available due to a lack of domain knowledge about the underlying conditional independence relations, and it is of interest to identify a sparse graphical model from sample observations of a collection of variables.  A challenge with this task is that there may be latent variables for which it is expensive or impossible to obtain sample observations.  Such unobserved variables pose a significant difficulty as graphical model structure is not closed under marginalization; therefore, the edge structure corresponding to the \emph{conditional} distribution of a collection of observed variables conditioned on latent variables is in general different from the \emph{marginal} distribution of the observed variables.  The graphical model of the observed variables conditioned on the latent variables signifies those statistical dependencies that are in some sense intrinsic to the observed variables, while the marginal graphical model of the observed variables consists of confounding dependencies that are induced due to marginalization over the latent variables, and this model typically consists of many more edges than the conditional graphical model.  In fact, even if the conditional graphical model is compactly described as a product of a small number of pairwise compatibility functions, the marginal model is in general much more complicated as it can consist of higher-order compatibility functions that link together large subsets of the observed variables; see Figure~\ref{fig:toy}. 

\FloatBarrier
\begin{figure}[ht!]
\centering
\begin{minipage}{0.3\textwidth}
\includegraphics[scale = 0.4]{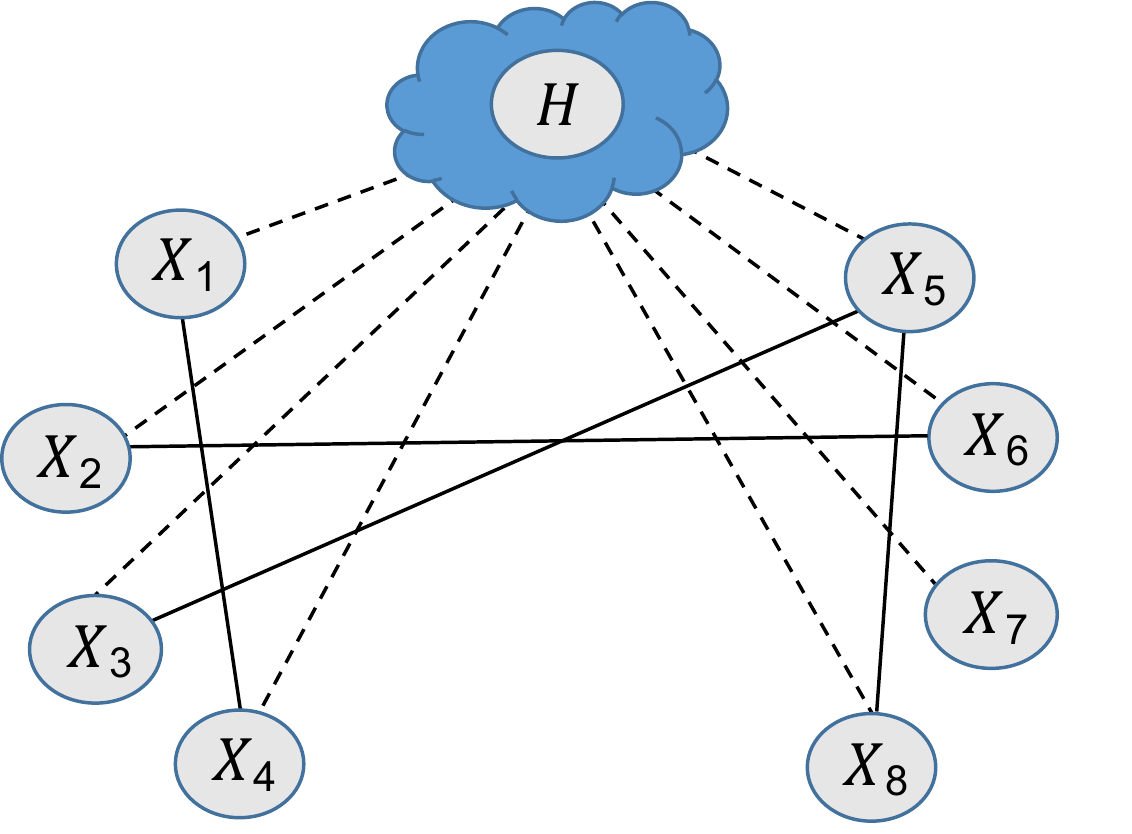}
\end{minipage}
\hspace{0.3in}
\begin{minipage}{0.3\textwidth}
\includegraphics[scale = 0.4]{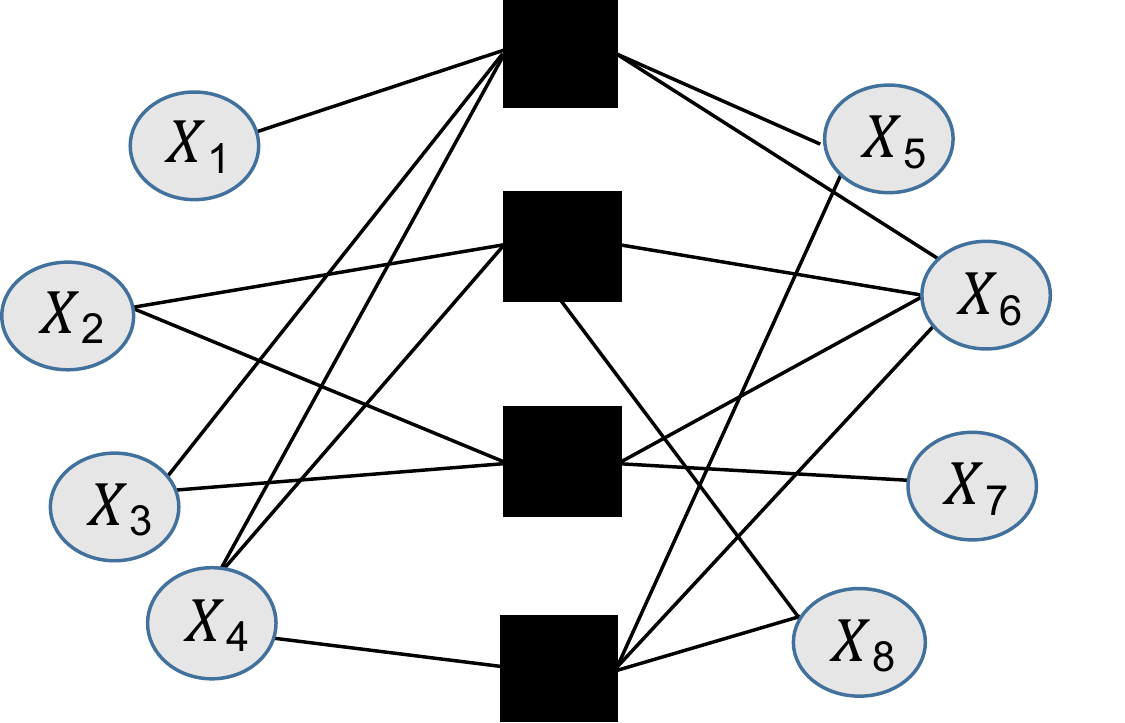}
\end{minipage}
\caption{An example of a graphical model over $8$ variables $X_1,\dots,X_8$ --- left: the variable $H$ represents an unobserved quantity, solid edges indicate pairwise interactions among observed variables, and dashed edges indicate links between observed and latent variables; right: factor graph where black squares represent higher-order interactions among variables that are linked through the factors.}
\label{fig:toy}
\end{figure}
\FloatBarrier
%\subsection{Prior Work and Outline of Our Approach}
The problem of learning a graphical model even without latent variables is computationally intractable in general, and accounting for the confounding effects of latent variables is more challenging.  There are a number of previous papers that propose computationally efficient approaches based both on combinatorial techniques \cite{bresler2,bresler1,ising_isinng} and on convex relaxation \cite{Chand2012,ising_gaussian} for learning graphical models with latent variables, along with theoretical or empirical demonstrations of the utility of these approaches for particular families of problem instances.  These methods proceed by studying the marginal distribution of the observed variables, and their derivation is based on an analysis of the structure of the confounding dependencies among the observed variables induced due to marginalization over the latent variables.  Consequently, the development of each of these methods is reliant on assumptions about the form of the joint distribution of the observed and latent variables -- jointly Gaussian observed and latent variables in \cite{Chand2012}, an Ising model specifying the observed and latent variables in \cite{bresler2,bresler1,ising_isinng}, and a conditionally Ising model for the observed variables with Gaussian latent variables in \cite{ising_gaussian}.

In this paper we present a new convex relaxation framework for latent-variable graphical model selection in which the conditional graphical model of the observed variables conditioned on the latent variables belongs to an exponential family of the form \eqref{eq:expfam}.  The virtues of convexity are by now self-evident -- tractable convex programs are manifestly implementable for moderate-sized problem instances based on off-the-shelf software packages, and in many cases, it is possible to develop special-purpose solvers that can scale to large instances by exploiting problem structure.  Perhaps the biggest conceptual distinction in our approach compared to those mentioned in the preceding paragraph is an analysis of the structure of the conditional graphical model of the observed variables conditioned on latent variables rather than the marginal distribution of the observed variables (which as explained previously can be very complicated in general).  Our viewpoint leads to tractable convex relaxations for a far broader class of models than those considered in previous work.  In particular, the derivation of our method does not require knowledge about the specific form of the distribution of the latent variables. In addition, our framework can also be specialized to settings in which the observed variables are not well-modeled as conditionally Gaussian or Bernoulli (such as with data specifying counts or taking on only positive values). In both these respects, our methodology is more broadly applicable than those described in the papers referenced above.

\subsection{Our Contributions} 
\vspace{0.05in}
We consider latent-variable graphical models in which a small number of latent variables $z \in \R^r$ influence the observed variables $x \in \mathcal{X}^d$, i.e., $r \ll d$, with the following form for the conditional distribution of $x | z$:
\begin{equation} \label{eq:lvgm}
f(x | z; \alpha, \Theta, B) \triangleq \exp\left\{(\alpha + Bz)' x -\tfrac{1}{2} x' \Theta x - \Phi(\alpha + Bz, \Theta) \right\} h(x).
\end{equation}
In words, the latent variables influence the natural parameters associated to the node compatibility functions in an affine manner specified by the parameter $B \in \R^{d \times r}$.  The form of this conditional distribution is akin to a generalized linear model with $x$ corresponding to the responses and $z$ the covariates, although there are two substantive differences -- one is that we do not observe $z$ and the other is that the different components of $x$ are not independent of each other in general after conditioning on $z$ (unless $\Theta_{i,j} = 0, ~ \forall i \neq j$) \cite{wrjordan}. The form of the conditional distribution \eqref{eq:lvgm} may also be interpreted as a conditional random field, but again with the distinction that we do not observe $z$ \cite{Genevera}.  The model \eqref{eq:lvgm} encompasses settings in which the joint distribution of the observed variables and the latent variables is given by a pairwise graphical model -- such as a Gaussian graphical model or an Ising model jointly over the observed and latent variables -- although our setup is more general as we do not assume a specific form for the distribution of the latent variables.  It is important to note that even if the conditional graphical model associated to $x | z$ is sparse (i.e., $\Theta$ is a sparse matrix), the marginal distribution of $x$ is in general dense depending on the distribution of $z$; more significantly, the conditional distribution of $x | z$ factorizes as a product of local functions (each depending on one or two variables), but the marginal distribution of $x$ may not be factorizable in such a local manner as the effect of marginalization over $z$ can induce confounding effects that couple all the components of $x$.

%With the following sufficient statistics:
%\begin{equation}
%\begin{aligned}
%\mu_n = \frac{1}{n} \sum_{k=1}^n x^{(k)}, ~~~~~ G_n = \frac{1}{n} \sum_{k=1}^n x^{(k)} {x^{(k)}}'.
%\end{aligned}
%\end{equation}

Our objective is to fit a latent-variable graphical model to a sample $\{x^{\scriptscriptstyle (k)}\}_{k=1}^n$ of size $n$ of the observed variables.  We assume that a selection of an exponential family that is best-suited to the data has been made (i.e., an appropriate choice of $\mathcal{X}$ and $h$, which in turn induce $\Phi$ and $\mathcal{F}$), and we consider the following regularized conditional likelihood optimization problem given user-specified regularization parameters $\lambda, \gamma \geq 0$:
\begin{equation}
\begin{aligned}
(\hat{\alpha}, \hat{\Theta}, \hat{L}) = \mathop{\arg\min}_{\substack{\alpha \in \R^d, \Theta \in \Sym^d \\ L \in \R^{d \times n}}} \, &\frac{1}{n} \sum_{k=1}^n \left[\Phi(\alpha+L^{\scriptscriptstyle (k)}, \Theta) - (\alpha + L^{\scriptscriptstyle (k)})' x^{\scriptscriptstyle (k)} + \tfrac{1}{2} {x^{\scriptscriptstyle (k)}}' \Theta x^{\scriptscriptstyle (k)} \right] + \lambda \|\Theta\|_{\ell_1} + \gamma \|L\|_\star \\ \mathrm{s.t.} \, &(\alpha+L^{\scriptscriptstyle (k)}, \Theta) \in \mathcal{F}, ~~~ k = 1,\dots,n.
\end{aligned}
\label{eq:clvgm}
\end{equation}
%\begin{equation}\label{eq:clvgm}
%\begin{aligned}
%(\hat{\alpha}, \hat{\Theta}, \hat{L}) = \mathop{\arg\min}_{\substack{\alpha \in \R^d, \Theta \in \Sym^d \\ L \in \R^{d \times n}}} ~ &-\frac{1}{n} \sum_{k=1}^n \log f(x^{(k)} | L^{(k)}) + \lambda \|\Theta\|_{\ell_1} + \gamma \|L\|_\star \\ \mathrm{s.t.} ~ &(\alpha+L^{(k)}, \Theta) \in \mathcal{F}, ~~~ k = 1,\dots,n.
%\end{aligned}
%\end{equation}
%Letting $X$ denote the $d \times n$ data matrix with columns $x^{(k)}$ and $1$ denote the all-ones vector of appropriate size, we note that
%\begin{equation*}
%-\frac{1}{n} \sum_{k=1}^n \log f(x^{(k)} | L^{(k)}) = \frac{1}{n} \sum_{k=1}^n \Phi(\alpha + L^{(k)}, \Theta) - \frac{1}{n} \mathrm{tr}((\alpha 1' + L)' X) + \frac{1}{2n}\mathrm{tr}(\Theta X X').
%\end{equation*}
The first part of the objective function represents the negative-logarithm of the conditional likelihood, with the vector $L^{\scriptscriptstyle (k)} \in \R^d$ denoting the $k$'th column of $L \in \R^{d \times n}$ and playing the role of $Bz^{\scriptscriptstyle (k)}$.  Both $B$ and $\{z^{\scriptscriptstyle (k)}\}_{k=1}^n$ are unknown, but the matrix with columns given by the elements of the set $\{Bz^{\scriptscriptstyle (k)}\}_{k=1}^n$ has small rank if the dimension $r$ of the latent vector satisfies $r \ll d$; the nuclear norm penalty $\|L\|_\star$ in the second line of the objective function is intended to promote this low-rank structure.  The $\ell_1$ penalty on $\Theta$ is useful for promoting sparsity of the conditional graphical model of the observed variables conditioned on the latent variables.  In summary, the optimization problem \eqref{eq:clvgm} is a convex program, although the log-partition function $\Phi$ may be intractable to compute in some cases (e.g., the conditional graphical model is an Ising model).  In such situations, one can appeal to further approximations from the literature which continue to preserve the convexity of the problem \cite{Besag,wrjordan}; see Section~\ref{section:non_gaussian}.  Finally, if we constrain the off-diagonal entries of the decision variable $\Theta$ in \eqref{eq:lvgm} to be zero, then we obtain a convex relaxation for an exponential-family generalization of principal components analysis (PCA) \cite{epca} in which one wishes to fit a model in which the different components of $x$ are independent of each other after conditioning on $z$ (i.e., $\Theta_{i,j} = 0, ~ \forall i \neq j$).

%Fitting to models of the form \eqref{eq:lvgm} in which the different components of $x$ are independent of each other after conditioning on $z$ (i.e., $\Theta_{i,j} = 0, ~ \forall i \neq j$) represents an exponential family generalization of principal components analysis (PCA) \cite{epca}, and our methodology can be specialized to yield a convex relaxation for this problem. 

%and in particular no prior knowledge of number $h$ or of the domain $\mathcal{Z}$ of the latent variables is required; if information about $\mathcal{Z}$ is available, e.g., $\mathcal{Z} = \{-1,+1\}$, then the nuclear norm regularizer can be strengthened to obtain a tighter relaxation (see Section[REF])

In Section~\ref{section:specialization} we specialize the formulation \eqref{eq:clvgm} to obtain computationally tractable methods for latent-variable graphical modeling for a range of exponential family graphical models, as well as a symmetry reduction of \eqref{eq:clvgm} for Gaussian models based on equivariance, which yields a convex program involving $d \times d$ matrices; in contrast, the complexity of solving \eqref{eq:clvgm} scales with the number of observations $n$ for general models.  In Section~\ref{sec:model_sel} we present a technique for selecting suitable regularization parameters $\lambda, \gamma$ \eqref{eq:clvgm}.  In each of these preceding sections, we provide evidence for the effectiveness of our framework via experiments on synthetic data.  In Section~\ref{sec:real}, we demonstrate the performance of our methods on real data.  Finally, we conclude with a discussion of future directions in Section~\ref{section:discussion}. 

The focus of this paper is on developing a mathematically principled and broadly applicable methodology for latent-variable graphical modeling.  We demonstrate the utility and flexibility of our framework empirically on synthetic and real data.  We describe a number of questions for future work that concern theoretical analysis of various aspects of our approach in Section~\ref{section:discussion}.

\subsection{Notation}
 We denote the identity matrix by $I$, with the size being clear from context.  The collection of positive-semidefinite matrices in $\mathbb{S}^d$ is denoted $\mathbb{S}^d_+$ and the collection of positive-definite matrices by $\mathbb{S}^d_{++}$.

\section{Specializations of Our Framework}
\label{section:specialization}
We describe specializations of our framework to various exponential family models.  Section~\ref{section:gaussian} concerns Gaussian models in which the log-partition function can be computed efficiently, and therefore our proposed method is simply the specialization of the convex relaxation \eqref{eq:clvgm} to the Gaussian case. Furthermore, we show that one can equivalently reformulate the Gaussian specialization of \eqref{eq:clvgm} as an SDP involving only $d \times d$ matrix decision variables, so that no decision variable has a dimension that scales with $n$. Section~\ref{section:non_gaussian} concerns several non-Gaussian models for which the likelihood is intractable to compute in general, and therefore computationally efficient approximations of \eqref{eq:clvgm} are required; we describe one such approach using the pseudo-likelihood approximation of Besag \cite{Besag}.  In both subsections, we discuss how previous approaches for graphical modeling without latent variables and for exponential family PCA may be obtained by restricting our relaxations appropriately.  Finally, we present numerical evidence for the effectiveness of our methods in Section~\ref{section:phase}.
\subsection{Gaussian Models} 
\label{section:gaussian}
Multivariate Gaussians constitute an exponential family of distributions with ancillary statistic $h \equiv 1$ and $\nu$ being the Lebesgue measure.  The parameter $\Theta$ is the precision or inverse covariance matrix and the mean is given by $\Theta^{\scriptscriptstyle -1} \alpha$.  The corresponding log-partition function and valid parameters are given by:
\begin{equation} \label{eq:gaussianexpfam}
\begin{aligned}
\Phi_{\mathrm{gaussian}}(\alpha,\Theta) &= \tfrac{1}{2} \left(\alpha' \Theta^{\scriptscriptstyle  -1} \alpha - \log\det \Theta + d \log 2 \pi \right) \\  \mathcal{F}_{\mathrm{gaussian}} &= \{(\alpha, \Theta) \in \R^d \times \Sym^d ~|~ \Theta \succ 0\}.
\end{aligned}
\end{equation}
Thus, we obtain the following convex relaxation for latent-variable Gaussian graphical modeling given data $\{x^{\scriptscriptstyle (k)}\}_{k=1}^n \subset \R^d$ and user-specified regularization parameters $\lambda, \gamma \geq 0$:
\begin{equation} \label{eq:gaussianclvgm}
\begin{aligned}
(\hat{\alpha}, \hat{\Theta}, \hat{L}) = \mathop{\arg\min}_{\substack{\alpha \in \R^d, \Theta \in \Sym^d \\ L \in \R^{d \times n}}} ~ &\frac{1}{n} \sum_{k=1}^n \left[\tfrac{1}{2}(\alpha + L^{\scriptscriptstyle (k)})' \Theta^{\scriptscriptstyle -1} (\alpha + L^{\scriptscriptstyle (k)}) - (\alpha + L^{\scriptscriptstyle (k)})' x^{\scriptscriptstyle (k)} + \tfrac{1}{2} {x^{\scriptscriptstyle (k)}}' \Theta x^{\scriptscriptstyle (k)} \right] \\ & -\frac{1}{2} \log\det \Theta + \lambda \|\Theta\|_{\ell_1} + \gamma \|L\|_\star \\ \mathrm{s.t.} ~ & \Theta \succ 0.
\end{aligned}
\end{equation}
Sublevel sets of the function $(\alpha + L^{(k)})' \Theta^{\scriptscriptstyle -1} (\alpha + L^{(k)})$ may be expressed via Schur complements, and therefore, this problem is a log-determinant semidefinite program (SDP) that can be solved to a desired precision in polynomial-time.   % In Section[REF] we compare Equation \eqref{eq:gaussianclvgm} to a previous SDP relaxation for latent-variable Gaussian graphical modeling that is based on regularized marginal likelihood [CITE].

If we do not account for the confounding effects of latent variables and set $L = 0$, then the convex program \eqref{eq:gaussianclvgm} specializes to the well-known `Graphical Lasso' method \cite{BEA:08,Glasso,Yuan}, which corresponds to $\ell_1$-regularized marginal log-likelihood.  On the other hand, if we restrict $\Theta$ to be a diagonal matrix we recover a convex relaxation for factor analysis \cite{Shapiro}.

%The parameter $\Theta$ is the precision or inverse covariance matrix and the mean is given by $\Theta^{-1} \alpha$.  The valid parameters are those in which the precision matrix is positive definite:
%\begin{equation} \label{eq:gaussianvalid}
%\mathcal{F}_{\mathrm{gaussian}} = \{(\alpha, \Theta) \in \R^d \times \Sym^d ~|~ \Theta \succ 0\}.
%\end{equation}

%\subsection{Symmetry Reduction via Equivariance for Gaussian Models}
%\label{section:equivariance}
The dimension of the decision variable $L$ in the convex relaxation \eqref{eq:gaussianclvgm} grows with the number of observations $n$.  Consequently, the computational runtime to solve \eqref{eq:gaussianclvgm} to a desired accuracy scales polynomially with $n$. We exploit an equivariance property underlying the estimator \eqref{eq:gaussianclvgm} in the Gaussian case to obtain an equivalent relaxation (after preprocessing) in which the dimensions of the decision variables do not depend on $n$.  (The main component of the preprocessing step is a singular value decomposition of a $d \times n$ matrix, but the complexity of this operation scales more modestly with $n$ than that of solving \eqref{eq:gaussianclvgm}.)  For ease of analysis, we set $\alpha = 0$ \eqref{eq:gaussianclvgm}; this restriction may be made with no loss of generality if we center the observations prior to solving \eqref{eq:gaussianclvgm}.

%\subsection{Reduction via Equivariance} 
Formally, letting $X \in \R^{d \times n}$ denote a data matrix with the observations $\{x^{(k)}\}_{k=1}^n \subset \R^d$ specifying the columns, the objective \eqref{eq:gaussianclvgm} may be written as follows:
\begin{equation} \label{eq:gaussianobj}
c(\Theta, L; X) = \tfrac{1}{2n} \mathrm{tr}(L' \Theta^{\scriptscriptstyle -1} L) -\tfrac{1}{2}\log\det(\Theta) - \tfrac{1}{n}\mathrm{tr}(L'X) + \tfrac{1}{2n}\mathrm{tr}(\Theta X X') + \lambda \|\Theta\|_{\ell_1} + \gamma \|L\|_\star.
\end{equation}
A key attribute of this expression is that for any matrix $W \in \R^{m \times n}, m \geq n,$ satisfying $W' W = I$, one can check that:
\begin{equation} \label{eq:gaussianequiv}
    c(\Theta, LW'; XW') = c(\Theta, L; X).
\end{equation}
The dimensions of the inputs of $c$ here are, in general, different on the left-hand-side versus the right-hand-side, but the expression \eqref{eq:gaussianobj} remains valid as long as the dimensions of the inputs/parameters to $c$ are consistent; we allow for this flexibility in our discussion in the sequel.  Hence, if the data matrix $X$ is transformed as $X \leftarrow X W'$ then the expression \eqref{eq:gaussianobj} remains unchanged with an analogous transformation $L \leftarrow L W'$ applied to the decision variable $L$ (and leaving $\Theta$ unchanged).  This \emph{equivariance} property enables a reduction in the size of the SDP \eqref{eq:gaussianclvgm}, which we formalize via the following result:

\begin{theorem}
Given a data matrix $X \in \R^{d \times n}$, let $\Sigma = \tfrac{1}{n} X X'$ denote the sample covariance matrix and consider the following optimization problem\footnote{To ensure that this problem has an optimal solution, it suffices to choose $\lambda > 0$ or to have $\Sigma \succ 0$.} with $\lambda, \gamma \geq 0$:
\begin{equation} \label{eq:gaussianreduced}
\begin{aligned}
(\hat{\Theta},\hat{H}) = \mathop{\arg\min}_{\substack{\Theta \in \Sym^d_{++} \\ H \in \R^{d \times d}}} ~ \tfrac{1}{2}\mathrm{tr}(H' \Theta^{\scriptscriptstyle -1} H) - \tfrac{1}{2} \log\det \Theta - \mathrm{tr}(H' \sqrt{\Sigma}) + \tfrac{1}{2}\mathrm{tr}(\Theta \Sigma) + \lambda \|\Theta\|_{\ell_1} + \gamma \sqrt{n} \|H\|_\star
\end{aligned}
\end{equation}
Here $\sqrt{\Sigma}$ denotes the positive-semidefinite square root of $\Sigma$.  Let $X = U D V'$ be a singular value decomposition of $X$.  Then $(\hat{\Theta}, \sqrt{n} \hat{H} U V')$ is an optimal solution of \eqref{eq:gaussianclvgm} with $\alpha = 0$.
\end{theorem}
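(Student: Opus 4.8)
The plan is to exploit the equivariance identity \eqref{eq:gaussianequiv} in order to show that an optimal $L$ in \eqref{eq:gaussianclvgm} (with $\alpha = 0$) may be taken to have all its rows in the row space of $X$, and then to substitute an explicit low-dimensional parametrization of such $L$ into the objective \eqref{eq:gaussianobj} so as to recover \eqref{eq:gaussianreduced} term by term. Write the singular value decomposition as $X = U D V'$ with $U \in \R^{d \times d}$ orthogonal and $V \in \R^{n \times d}$ satisfying $V'V = I_d$ (the regime $d \leq n$; the rank-deficient case is analogous), and let $P = VV' \in \R^{n \times n}$ be the orthogonal projection onto $\mathrm{col}(V)$, so that $X = XP$ and $X(I-P) = 0$.

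First I would fix $\Theta \succ 0$ and analyze the dependence of $c(\Theta, L; X)$ on $L$ through the decomposition $L = LP + L(I-P)$. The linear term obeys $\mathrm{tr}(L'X) = \mathrm{tr}((LP)'X)$ because $X(I-P) = 0$; the quadratic term splits as $\mathrm{tr}(L'\Theta^{-1}L) = \mathrm{tr}((LP)'\Theta^{-1}(LP)) + \mathrm{tr}((L(I-P))'\Theta^{-1}(L(I-P)))$ since the cross terms vanish by $(I-P)P = 0$; and the nuclear norm satisfies $\|LP\|_\star \leq \|L\|_\star$ because $P$ is a contraction. Hence replacing $L$ by $LP$ never increases $c(\Theta, \cdot\,; X)$, so for every $\Theta$ the inner minimization over $L$ attains its value on the subspace $\{L : L = LP\}$, and the joint minima of the full and the $\mathrm{col}(V)$-restricted problems coincide. (Equivalently, one may apply \eqref{eq:gaussianequiv} with the orthogonal completion of $V$ to rotate $X$ into the canonical form $[\,UD \mid 0\,]$, after which the same computation forces the remaining columns of $L$ to vanish.) I expect this step---specifically the monotonicity $\|LP\|_\star \leq \|L\|_\star$ of the nuclear norm under the projection---to require the most care, since the other two terms reduce to elementary orthogonality computations.

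Next I would introduce the parametrization realizing the claimed correspondence. Every $L$ with rows in $\mathrm{col}(V)$ can be written uniquely as $L = \sqrt{n}\, H U V'$ with $H = \tfrac{1}{\sqrt{n}}(LV)U' \in \R^{d \times d}$. Substituting this together with $\Sigma = \tfrac{1}{n} X X' = U(\tfrac{1}{n} D^2) U'$, so that $\sqrt{\Sigma} = \tfrac{1}{\sqrt{n}} U D U'$, into \eqref{eq:gaussianobj} and using $V'V = I_d$ and $UU' = I_d$, I would verify each term in turn: $\tfrac{1}{2n}\mathrm{tr}(L'\Theta^{-1}L) = \tfrac{1}{2}\mathrm{tr}(H'\Theta^{-1}H)$, next $-\tfrac{1}{n}\mathrm{tr}(L'X) = -\mathrm{tr}(H'\sqrt{\Sigma})$, next $\tfrac{1}{2n}\mathrm{tr}(\Theta X X') = \tfrac{1}{2}\mathrm{tr}(\Theta \Sigma)$, and---using that $UU' = I_d$ and $V'V = I_d$ give $(HUV')(HUV')' = HH'$, whence $\|HUV'\|_\star = \|H\|_\star$---the penalty $\gamma \|L\|_\star = \gamma \sqrt{n}\, \|H\|_\star$. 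The terms $-\tfrac{1}{2}\log\det\Theta$ and $\lambda\|\Theta\|_{\ell_1}$ are untouched. Thus $c(\Theta, \sqrt{n}\, H U V'; X)$ equals the objective of \eqref{eq:gaussianreduced} evaluated at $(\Theta, H)$.

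Finally I would assemble these pieces. Since $H \mapsto \sqrt{n}\, H U V'$ is a bijection onto the $\mathrm{col}(V)$-restricted feasible set for $L$ under which the two objectives agree, the optimal values of \eqref{eq:gaussianreduced} and of \eqref{eq:gaussianclvgm} (with $\alpha = 0$) are equal, and any minimizer $(\hat{\Theta}, \hat{H})$ of the former yields the minimizer $(\hat{\Theta}, \sqrt{n}\, \hat{H} U V')$ of the latter; feasibility is immediate since $\hat{\Theta} \in \Sym^d_{++}$. The footnote's hypothesis ($\lambda > 0$ or $\Sigma \succ 0$) guarantees that \eqref{eq:gaussianreduced} attains its minimum, so such a pair $(\hat{\Theta}, \hat{H})$ exists.
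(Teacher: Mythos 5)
Your treatment of the regime $d \le n$ is correct, and it is essentially the paper's own argument reorganized: the three facts behind your projection step (the linear term is unchanged since $X(I-P)=0$, the quadratic term only decreases since $\Theta^{-1} \succ 0$, and $\|LP\|_\star \le \|L\|_\star$ by H\"older) are exactly the term-wise estimates the paper establishes in one direction of its correspondence, and your objective-preserving parametrization $L = \sqrt{n}\,HUV'$ is the paper's other direction. So for $d \le n$ there is nothing to object to.

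The genuine gap is the parenthetical claim that ``the rank-deficient case is analogous.'' The remaining case is $n < d$, where the SVD factors have shapes $U \in \R^{d \times n}$ with $U'U = I_n$ but $UU' \neq I_d$, and $V \in \R^{n \times n}$ orthogonal. There your scheme fails at both steps. First, $P = VV' = I_n$, so the restriction $L = LP$ is vacuous and buys nothing. Second, every identity in your bijection step that invokes $UU' = I_d$ breaks: the map $H \mapsto \sqrt{n}\,HUV'$ from $\R^{d\times d}$ to $\R^{d \times n}$ is surjective but not injective (its kernel is $\{H : HU = 0\}$, which is nontrivial since $\mathrm{rank}(U) = n < d$), and the objectives no longer match term by term --- for instance $\tfrac{1}{2n}\mathrm{tr}(L'\Theta^{-1}L) = \tfrac{1}{2}\mathrm{tr}(H'\Theta^{-1}H\,UU')$, which can be strictly smaller than $\tfrac{1}{2}\mathrm{tr}(H'\Theta^{-1}H)$, and $\|HUV'\|_\star = \|HUU'\|_\star$ can be strictly smaller than $\|H\|_\star$. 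So the asserted objective-preserving bijection does not exist in this case, and equality of optimal values does not follow from your argument as written. The repair is the mirror image of your first step, applied to \eqref{eq:gaussianreduced} rather than to \eqref{eq:gaussianclvgm}: since $\sqrt{\Sigma}\,UU' = \sqrt{\Sigma}$, replacing $H$ by $HUU'$ leaves $\mathrm{tr}(H'\sqrt{\Sigma})$ unchanged and weakly decreases both the quadratic term and the nuclear norm, so one may restrict \eqref{eq:gaussianreduced} to $\{H : H = HUU'\}$; on that set $H \mapsto \sqrt{n}\,HUV'$ \emph{is} an objective-preserving bijection onto all of $\R^{d \times n}$, and the theorem follows. (The paper instead closes this case with a two-directional comparison: any feasible $(\Theta,L)$ of \eqref{eq:gaussianclvgm} maps to $(\Theta, \tfrac{1}{\sqrt{n}}LVU')$ with equal cost via the equivariance identity \eqref{eq:gaussianequiv}, while any feasible $(\Theta,H)$ of \eqref{eq:gaussianreduced} maps to $(\Theta, \sqrt{n}\,HUV')$ with equal or lower cost.) Either route works, but some such modification is genuinely needed; ``analogous'' as stated is not accurate.
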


\begin{proof}
Consider the case $n \leq d$ in which $U \in \R^{d \times n}, V \in \R^{n \times n}$.  One can check that:
\begin{equation*}
    c(\Theta, L; X) = c(\Theta, LVU'; XVU') = c(\Theta, LVU'; \sqrt{n} \sqrt{\Sigma})
\end{equation*}
The first equality follows from the equivariance relation \eqref{eq:gaussianequiv} and the second equality follows from the definition of $\Sigma$.  Setting $H = \tfrac{1}{\sqrt{n}} L V U'$, the expression $c(\Theta, LVU'; \sqrt{n} \sqrt{\Sigma})$ is equal to the objective of \eqref{eq:gaussianreduced}.  Hence, a feasible $(\Theta,L)$ for \eqref{eq:gaussianclvgm} with $\alpha = 0$ leads to a feasible point $(\Theta,H)$ for \eqref{eq:gaussianreduced} with equal cost.  In the other direction, consider a feasible $(\Theta,H)$ for \eqref{eq:gaussianreduced}, and set $L = \sqrt{n} H U V'$.  With this $(\Theta,L)$, we consider the three terms of $c(\Theta,L)$ from \eqref{eq:gaussianobj} involving $L$.  First, we note using cyclicity of trace that:
\begin{equation*}
    \tfrac{1}{2n} \mathrm{tr}(L' \Theta^{\scriptscriptstyle -1} L) = \tfrac{1}{2} \mathrm{tr}(V U' H' \Theta^{\scriptscriptstyle -1} H U V') = \tfrac{1}{2} \mathrm{tr}(H' \Theta^{\scriptscriptstyle -1} H U U') \leq \tfrac{1}{2} \mathrm{tr}(H' \Theta^{\scriptscriptstyle -1} H),
\end{equation*}
which follows from $V'V = I$ and $I \succeq U U'$.  Next, we have that:
\begin{equation*}
    \tfrac{1}{n} \mathrm{tr}(L'X) = \tfrac{1}{\sqrt{n}} \mathrm{tr}(V U' H' X) = \tfrac{1}{\sqrt{n}} \mathrm{tr}(H' X V U') = \mathrm{tr}(H' \sqrt{\Sigma})
\end{equation*}
using the definition of $\Sigma$ and the cyclicity of trace.  Finally, we observe that:
\begin{equation*}
    \gamma \|L\|_\star = \gamma \sqrt{n} \| H U V'\|_\star \leq \gamma \sqrt{n} \|H\|_\star \|U V'\|_2 \leq \gamma \sqrt{n} \|H\|_\star
\end{equation*}
by applying the H\"older inequality to the nuclear norm.  Therefore, for any feasible $(\Theta,H)$ for \eqref{eq:gaussianreduced}, the point $(\Theta,\sqrt{n} H U V')$ is feasible for \eqref{eq:gaussianclvgm} with $\alpha = 0$ and has equal or lower cost.

% This concludes the proof of the case in which $n \leq d$.

Consider next the case $n > d$ in which $U \in \R^{d \times d}, V \in \R^{n \times d}$.  Set $W = \begin{pmatrix} U V' \\ P \end{pmatrix} \in \R^{n \times n}$ with $P \in \R^{(n-d) \times n}$ so that $WW' = W'W = I$.  One can then check that:
\begin{equation*}
    c(\Theta,L; X) = c(\Theta, L W'; X W') = c(\Theta, (LVU' ~~ LP'); (\sqrt{n} \sqrt{\Sigma} ~~ 0))
\end{equation*}
The first equality follows from the equivariance relation \eqref{eq:gaussianequiv}, and the second equality follows from the properties of $W$ and the definition of $\Sigma$.  Given a feasible $(\Theta,H)$ of \eqref{eq:gaussianreduced} and setting $L = \sqrt{n} H U V'$, we observe that $L V U' = \sqrt{n} H$ and $L P' = 0$.  Thus, with this choice of $L$ the expression $c(\Theta, L; X) = c(\Theta, (\sqrt{n} H ~~ 0); (\sqrt{n} \sqrt{\Sigma} ~~ 0))$ equals the objective of \eqref{eq:gaussianreduced}.  Hence, from a feasible point of \eqref{eq:gaussianreduced}, we obtain a feasible point of \eqref{eq:gaussianclvgm} with equal cost.  In the other direction, let $(\Theta,L)$ be feasible for \eqref{eq:gaussianclvgm} and set $H = \tfrac{1}{\sqrt{n}} L V U'$.  We consider the three terms of the objective from \eqref{eq:gaussianreduced} involving $H$.  First, we note using the cyclicity of trace that:
\begin{equation*}
    \tfrac{1}{2} \mathrm{tr}(H' \Theta^{\scriptscriptstyle -1} H) = \tfrac{1}{2n} \mathrm{tr}(U V' L' \Theta^{\scriptscriptstyle -1} L V U') = \tfrac{1}{2n} \mathrm{tr}( L' \Theta^{\scriptscriptstyle -1} L V V') \leq \tfrac{1}{2n} \mathrm{tr}( L' \Theta^{\scriptscriptstyle -1} L),
\end{equation*}
which follows from $U'U = I$ and $I \succeq V V'$.  Next, we have that:
\begin{equation*}
    \mathrm{tr}(H' \sqrt{\Sigma}) = \tfrac{1}{\sqrt{n}}\mathrm{tr}(U V' L' \sqrt{\Sigma}) = \tfrac{1}{n} \mathrm{tr}(L' X),
\end{equation*}
using the definition of $\Sigma$ and the cyclicity of trace.  Finally, we observe that:
\begin{equation*}
    \gamma \sqrt{n} \|H\|_\star = \gamma \|L V U'\|_\star \leq \gamma \|L\|_\star \|VU'\|_2 \leq \gamma \|L\|_\star,
\end{equation*}
by applying the H\"older inequality to the nuclear norm.
Thus, for any feasible $(\Theta,L)$ for \eqref{eq:gaussianclvgm} with $\alpha = 0$, the point $(\Theta,\tfrac{1}{\sqrt{n}}LVU')$ is feasible for \eqref{eq:gaussianreduced} and has lower or equal cost.  %This concludes the proof of the case $n > d$.
\end{proof}
Although this result holds for arbitrary $n$, it is most relevant when $n \gg d$.
\subsection{Non-Gaussian Models}
\label{section:non_gaussian}
\vspace{0.05in}
We consider three exponential family graphical models that are relevant in settings in which the observations are not well-modeled as Gaussian.  These are derived by considering pairwise graphical models in which the variable at each node conditioned on the variables at all the other nodes is distributed according to a Bernoulli, Poisson, or exponential random variable \cite{Besag,Genevera}; as such the following models represent natural multivariate generalizations of popular univariate exponential families:

\noindent \paragraph{Ising models} Here $\mathcal{X} = \{-1,+1\}$, the ancillary statistic is $h \equiv 1$, and $\nu$ is the counting measure.  As the log-partition function is given by a finite sum, the set of valid parameters is not constrained in a significant way:
\begin{equation} \label{eq:isingvalid}
\mathcal{F}_{\mathrm{ising}} = \{(\alpha, \Theta) \in \R^d \times \Sym^d ~|~ \Theta_{i,i} = 0, ~ i=1,\dots,d \}.
\end{equation}
The condition on the diagonal elements of $\Theta$ is due to the fact $x_i^2 = 1$ for $x_i \in \{-1,+1\}$, and therefore the diagonal elements of $\Theta$ do not offer any degrees of freedom.

\noindent \paragraph{Poisson graphical models} Here $\mathcal{X} = \mathbb{Z}_+$, the ancillary statistic is $h(x_1,\dots,x_d) = \prod_{i=1}^d \tfrac{1}{x_i!}$, and $\nu$ is the counting measure.  To ensure that each $x_i | x_{\backslash i}$ is distributed as a Poisson random variable, each $\Theta_{i,i} = 0$ for $i=1,\dots,d$.  The set of valid parameters for which the associated distribution is normalizable is given by:
\begin{equation} \label{eq:poissonvalid}
\mathcal{F}_{\mathrm{poisson}} = \{(\alpha,\Theta) \in \R^d \times \Sym^d ~|~ \Theta_{i,j} \geq 0, i,j=1,\dots,d; ~ \Theta_{i,i} = 0, i=1,\dots,d\}.
\end{equation}

\noindent \paragraph{Exponential graphical models} Here $\mathcal{X} = \R_+$, the ancillary statistic is $h \equiv 1$, and $\nu$ is the Lebesgue measure.  To ensure that each $x_i | x_{\backslash i}$ is distributed as an exponential random variable, each $\Theta_{i,i} = 0$ for $i=1,\dots,d$.  The set of valid parameters for which the associated distribution is normalizable is given by:
\begin{equation} \label{eq:exponentialvalid}
\begin{aligned}
\mathcal{F}_{\mathrm{exponential}} = \{(\alpha,\Theta) \in \R^d \times \Sym^d ~|~ & \alpha_i < 0, i=1,\dots,d; ~ \Theta_{i,j} \geq 0, i,j=1,\dots,d; \\ & \Theta_{i,i} = 0, i=1,\dots,d\}.
\end{aligned}
\end{equation}

In each of these three cases, the log-partition is intractable to compute; for such situations, a number of convex approximations of the partition function that are tractable to compute are available in the literature (see \cite{wrjordan} and the references therein), and these may be employed as surrogates \eqref{eq:clvgm} to obtain computationally efficient convex relaxations.  In our numerical experiments in Sections~\ref{section:phase} and \ref{sec:real}, we use the following pseudo-likelihood approximation due to Besag \cite{Besag}:
\begin{equation} \label{eq:besagpseudo}
f(x_1,\dots,x_d | z; \alpha, \Theta, B) \approx \prod_{i=1}^d f(x_i | x_{\backslash i}, z; \alpha, \Theta, B).
\end{equation}
For exponential family distributions of the form \eqref{eq:expfam}, this approximation replaces partition functions associated to $d$-dimensional distributions that are potentially expensive to compute by a collection of $d$ one-dimensional partition functions.  In the three particular examples above, the diagonal elements of $\Theta$ are zero.  Further, the ancillary statistic $h$ is a product of the form $h(x_1,\dots,x_d) = \prod_{i=1}^d \underline{h}(x_i)$, with $\underline{h} = 1$ for the Bernoulli and exponential case and $\underline{h}(y) = \tfrac{1}{y!}$ for the Poisson case.  Consequently, we obtain the following expression:
\begin{equation} \label{eq:besagexpfam}
\begin{aligned}
\prod_{i=1}^d f(x_i | x_{\backslash i}, z; \alpha, \Theta, B) &= \exp\left\{(\alpha+Bz)' x - x' \Theta x - \sum_{i=1}^d \rho\left(\alpha_i + (Bz)_i - \Theta_{i,\backslash i} x_{\backslash i} \right) \right\} h(x) \\ \rho(u) &\triangleq \int_{\mathcal{X}} \exp\left\{uy \right\} \underline{h}(y) d\underline{\nu}(y).
\end{aligned}
\end{equation}
Here $\underline{\nu}$ represents either the one-dimensional counting measure (Bernoulli, Poisson) or the Lebesgue measure on $\R$ (exponential).  Thus, each of the $d$ terms $\rho(\alpha_i + (Bz)_i - \Theta_{i,\backslash i} x_{\backslash i})$ corresponding to the normalization for each $f(x_i | x_{\backslash i})$ entails a one-dimensional integral/sum, and in each of the three examples above, the function $\rho$ is expressible in closed form.  With this approximation, we obtain the following regularized conditional pseudo-likelihood optimization problem given data $\{x^{\scriptscriptstyle (k)}\}_{k=1}^n \subset \mathcal{X}^d$ and user-specified regularization parameters $\lambda, \gamma \geq 0$:
\begin{equation} 
\begin{aligned}
(\hat{\alpha}, \hat{\Theta}, \hat{L}) = \mathop{\arg\min}_{\substack{\alpha \in \R^d, \Theta \in \Sym^d \\ L \in \R^{d \times n}}} ~ &\frac{1}{n} \sum_{k=1}^n \left[ \left( \sum_{i=1}^d \rho\left(\alpha_i + L^{\scriptscriptstyle (k)}_i - \Theta_{i,\backslash i} x^{\scriptscriptstyle (k)}_{\backslash i} \right) \right) - (\alpha + L^{\scriptscriptstyle (k)})' x^{\scriptscriptstyle (k)} + {x^{\scriptscriptstyle (k)}}' \Theta x^{\scriptscriptstyle (k)} \right] \\ &+ \lambda \|\Theta\|_{\ell_1} + \gamma \|L\|_\star \\ \mathrm{s.t.} ~ & (\alpha + L^{\scriptscriptstyle (k)}, \Theta) \in \mathcal{F}, ~~~ k=1,\dots,n.
\end{aligned}
\label{eq:pseudoclvgm}
\end{equation}
We can specialize this convex relaxation to each of the three examples described above with the corresponding choice of valid parameters in the constraint and the following one-dimensional log-partition functions in the objective:
\begin{equation} \label{eq:nongaussianlogpart}
\rho_{\mathrm{ising}}(u) = \log \cosh(u) ~~~~~ \rho_{\mathrm{poisson}}(u) = \exp(u) ~~~~~ \rho_{\mathrm{exponential}}(u) = -\log(-u).
\end{equation}

If we do not account for the confounding effects of latent variables and set $L = 0$, then we recover a ``coupled'' analog of the neighborhood selection approaches of \cite{Meinhausen,Ravikumar,Genevera}, which identify the neighborhood of each node in the graph one at a time by solving $d$ uncoupled $\ell_1$-regularized regression problems.  One (relatively minor) issue with solving $d$ uncoupled problems is that one subsequently needs to reconcile the solutions to obtain a coherent global model over all the variables, although there are several ways to accomplish this \cite{Hofling,Meinhausen,Ravikumar,Genevera}.  A more significant issue with solving $d$ uncoupled neighborhood selection problems is that it is not clear how to adapt that method to account for the effects of latent variables.  In particular, latent variables can simultaneously influence all the observed variables, which necessitates an approach that jointly estimates the (local) neighborhoods of all the nodes at the same time while also teasing apart the (global) effects of the latent variables, as in \eqref{eq:pseudoclvgm}.  In another direction, if we set $\Theta = 0$ in \eqref{eq:pseudoclvgm} we obtain a convex relaxation for the exponential family PCA problem \cite{epca}. 
%Since the decision variables in Equation \eqref{eq:pseudoclvgm} are coupled in an additive manner, we employ alternating direction method of multipliers (ADMM) to obtain provably optimal solutions \cite{ADMM}.

\subsection{Empirical demonstrations for Gaussian and non-Gaussian models}
\label{section:phase}
\vspace{0.1in}
We evaluate next the empirical performance of the relaxation for the Gaussian case \eqref{eq:gaussianreduced} and the relaxation for the non-Gaussian case \eqref{eq:pseudoclvgm} for fitting Ising, Poisson, and exponential graphical models when confounded by latent variables. The following are common elements of the setup for each distributional setting: we consider a collection of $d = 60$ observed variables whose distribution conditioned on some latent variables is given by a Gaussian, Ising, Poisson, or exponential graphical model; the distribution of the latent variables is specified later in each case.  We generate two types of graphs with a corresponding $\Theta \in \mathbb{S}^{60}$: a cycle graph and a Erd\"os R\'eyni graph with edge probabilities $0.02$. We vary the number of latent variables $r = \{1,2,3\}$ and generate the matrix $B \in \mathbb{R}^{60 \times r}$ so that the coherence\footnote{The coherence of a subspace $\mathcal{S} \subset \R^d$ measures how well $\mathcal{S}$ is aligned with the standard basis vectors in $\R^d$; it is equal to  $\max_{i=1,\dots,d} ~ \| \mathcal{P}_{\mathcal{S}}(e^{(i)}) \|_{\ell_2}$, where $e^{(i)}$ is the $i$'th standard basis vector.  The coherence of $\mathcal{S}$ lies in the range $[\sqrt{\mathrm{dim}(\mathcal{S}) / d},1]$, and this parameter commonly arises in the characterization of statistical identifiability as well as in the analysis of convex relaxations in sparse/low-rank recovery problems.} of its column-space is approximately $1.2{r}/{d}$. The singular values of $B$ vary for each distribution and are described below. Finally, unless otherwise specified, we set $\alpha \in \mathbb{R}^{60}$  to be the identically zero vector, and the nonzero off-diagonal entries of $\Theta$ to be $0.4$. 

{\emph{Gaussian setup}}: The distribution of the observed variables conditioned on zero-mean Ising latent variables is a Gaussian graphical model.  The diagonal entries of $\Theta$ are set to $1$.  The singular values of $B$ are chosen to be $\{0.72\}$,$\{0.7,0.7\}$ and $\{0.68,0.68,0.68\}$ for $r = 1,2,3$ latent variables, respectively. 

{\emph{Ising setup}}: The distribution of the observed variables conditioned on independent normally distributed hidden variables is an Ising graphical model.  The singular values of $B$ are $\{0.72\}$,$\{0.7,0.7\}$ and $\{0.68,0.68,0.68\}$ for $r = 1,2,3$ latent variables, respectively. 

{\emph{Poisson setup}}: The distribution of observed conditioned on independent and identically distributed zero-mean Ising hidden variables is a Poisson graphical model.  The singular values of $B$ are chosen to be $\{2\},\{1.95,1.95\},\{1.9,1.9,1.9\}$ for $r = 1,2,3$ latent variables, respectively. 

{\emph{Exponential setup}}: The distribution of the observed variables conditioned on independent and identically distributed mean-$1$ exponential hidden variables is an exponential graphical model. Due to the parameter restriction $\mathcal{F}_\text{exponential}$ in \eqref{eq:exponentialvalid}, the entries of $\Theta$ must be non-negative, entries of $B$ must be non-positive, and $\alpha$ must consist of negative entries.  We set the edge weights (i.e., non-zero entries of $\Theta$) to be $1$ and the singular values of $B$ are chosen to be $\{2\},\{1.95,1.95\},\{1.9,1.9,1.9\}$ for $r = 1,2,3$ latent variables, respectively.  Finally, we set all the entries of $\alpha$ to be equal to $-1$. 

For each problem setting, we generate observations via Gibbs sampling to obtain training data $\{x^{(i)}\}_{i = 1}^n \subseteq \mathbb{R}^d$. We supply the data to the estimator \eqref{eq:gaussianreduced} for the Gaussian model and to the estimator \eqref{eq:pseudoclvgm} for non-Gaussian models (with $\rho$ selected suitably). The regularization parameters $\lambda,\gamma$ are chosen with the scaling $\lambda = c_1\sqrt{\frac{d}{n}}$ and $\gamma = c_2\frac{\sqrt{d}}{n}$, for constants $c_1,c_2$.  We evaluate the probability (computed over ten independent trials) that the estimated model correctly identifies the graphical structure as well as the number of latent variables. Figure~\ref{fig:consist} displays the empirical consistency results for all problem settings. We observe that given sufficient sample size, the estimators \eqref{eq:gaussianreduced} and \eqref{eq:pseudoclvgm} are successful at correctly identifying the model structure. 
\FloatBarrier
\begin{figure}[ht!]
\centering
\subfigure[\small{Gaussian: cycle}]{
\includegraphics[scale = 0.20]{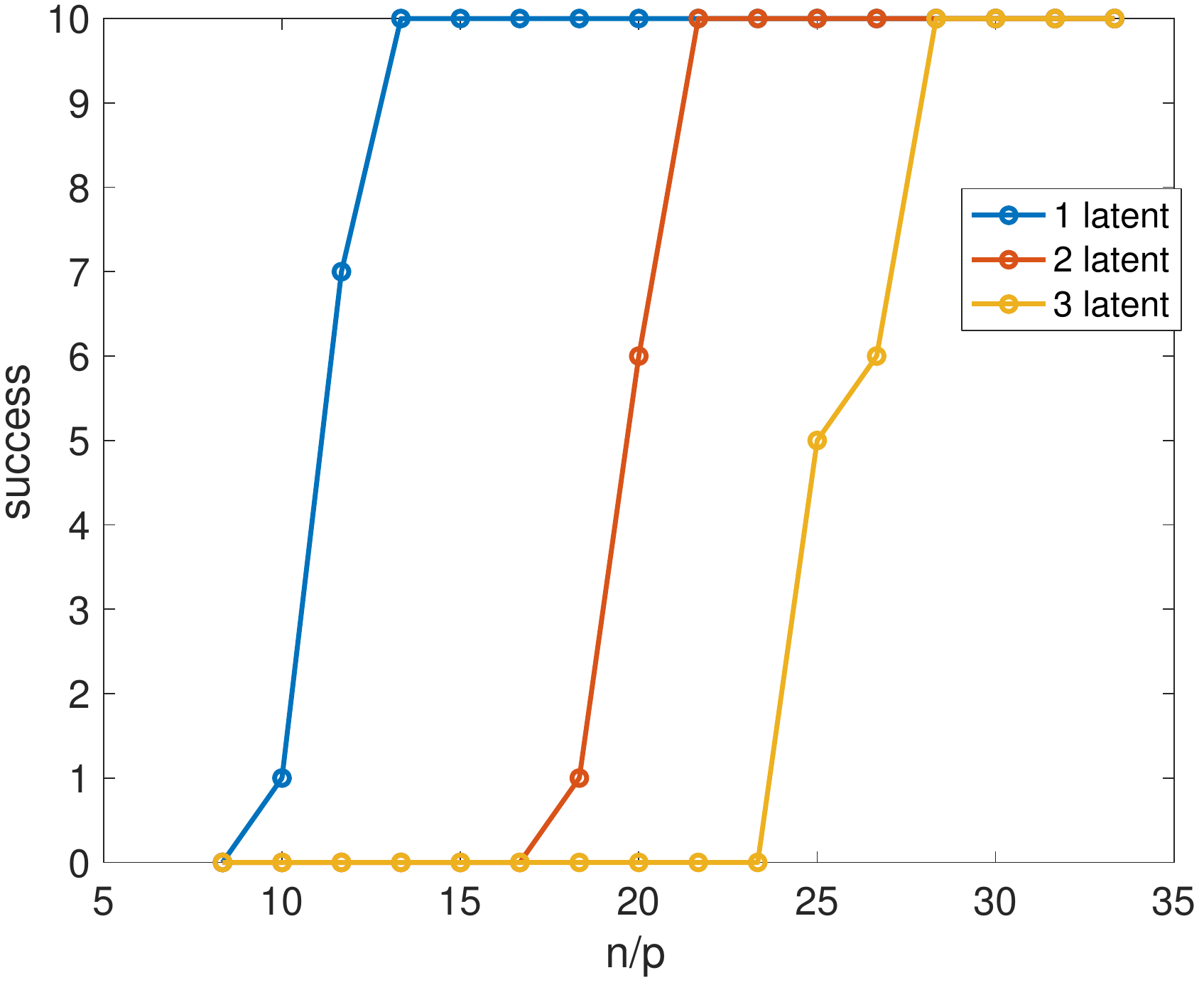}}
\subfigure[\small{Gaussian: E-R}]{
\includegraphics[scale = 0.20]{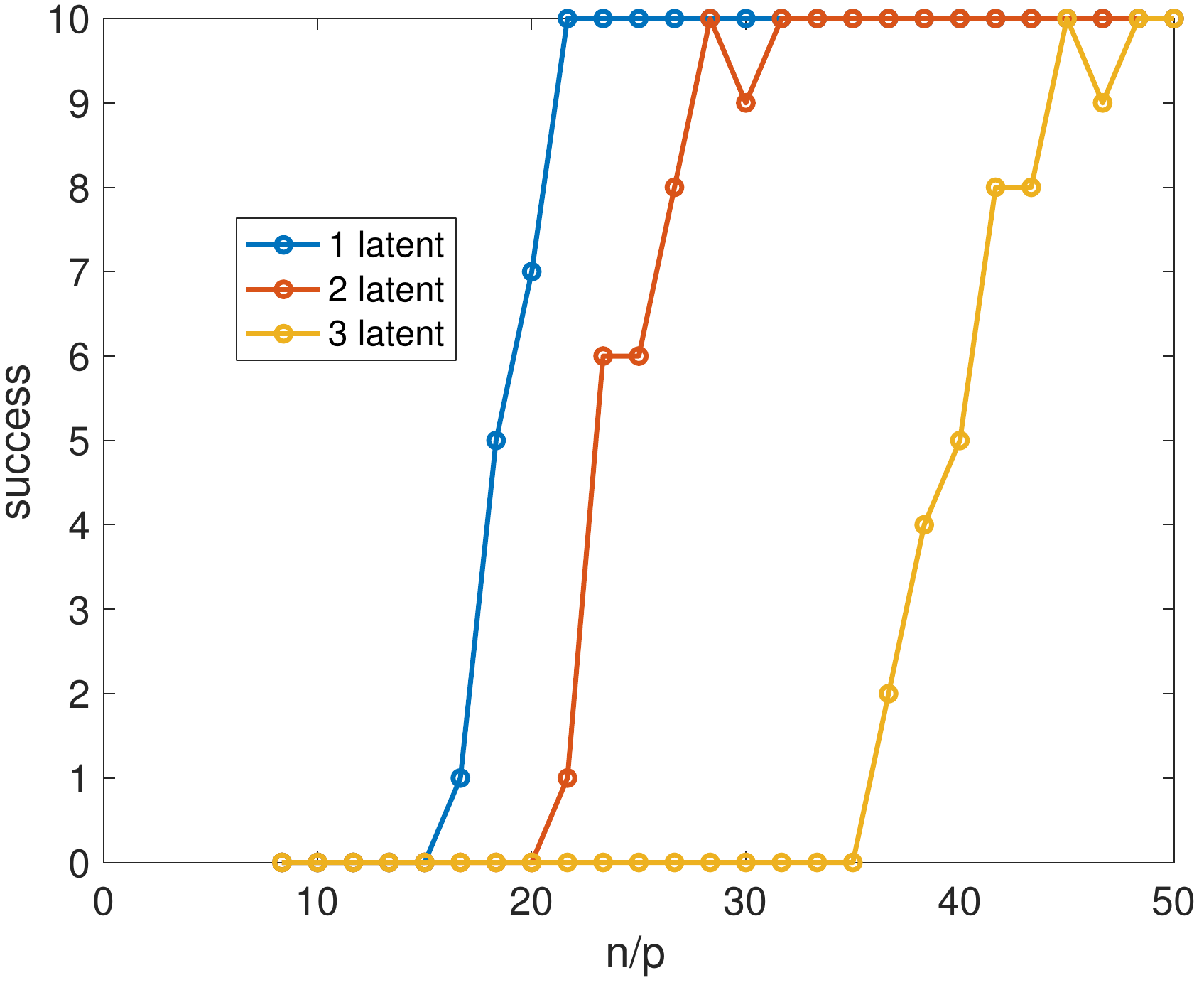}}
\subfigure[Ising: cycle]{
\includegraphics[scale = 0.20]{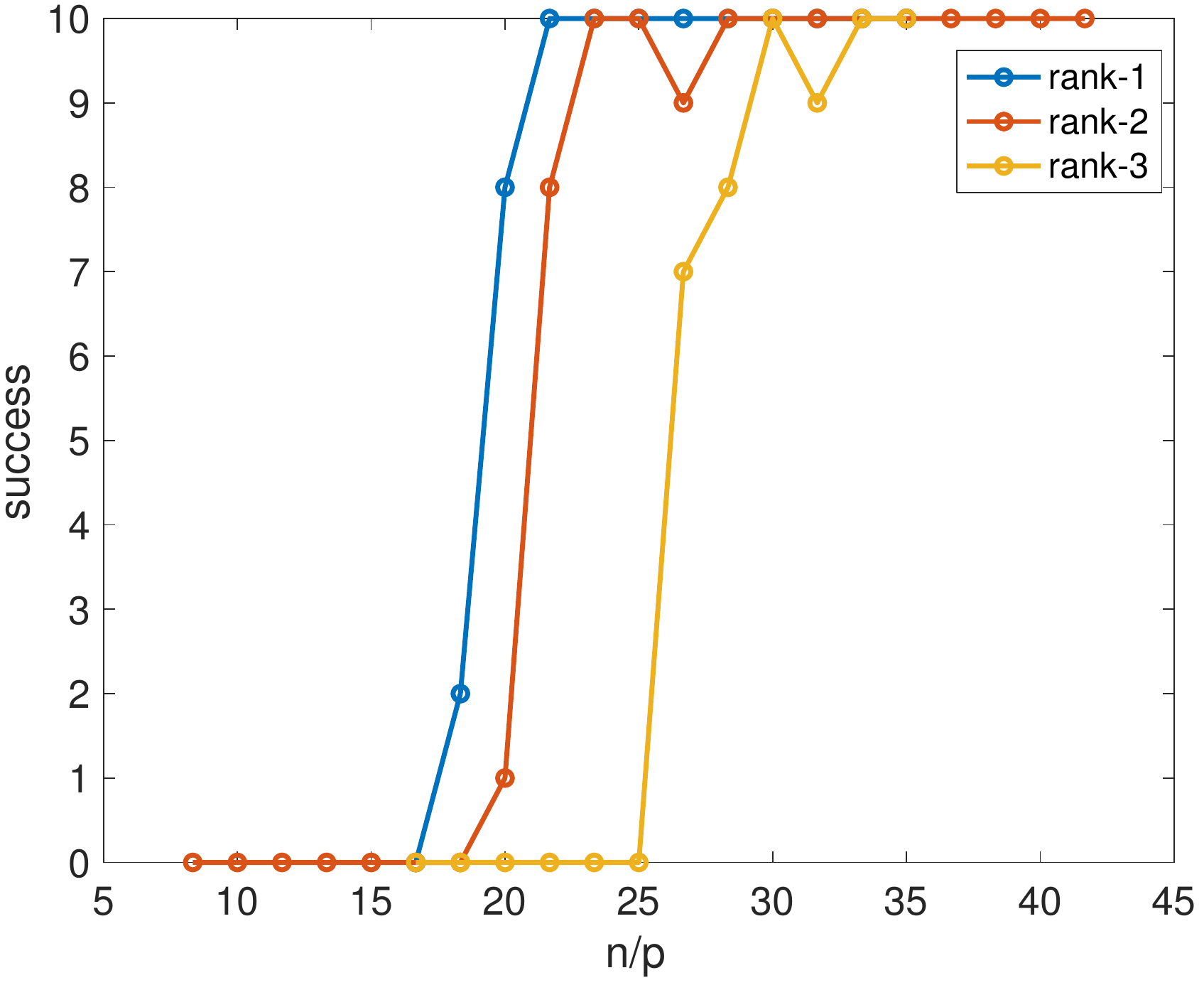}}
\subfigure[Ising: E-R]{
\includegraphics[scale = 0.20]{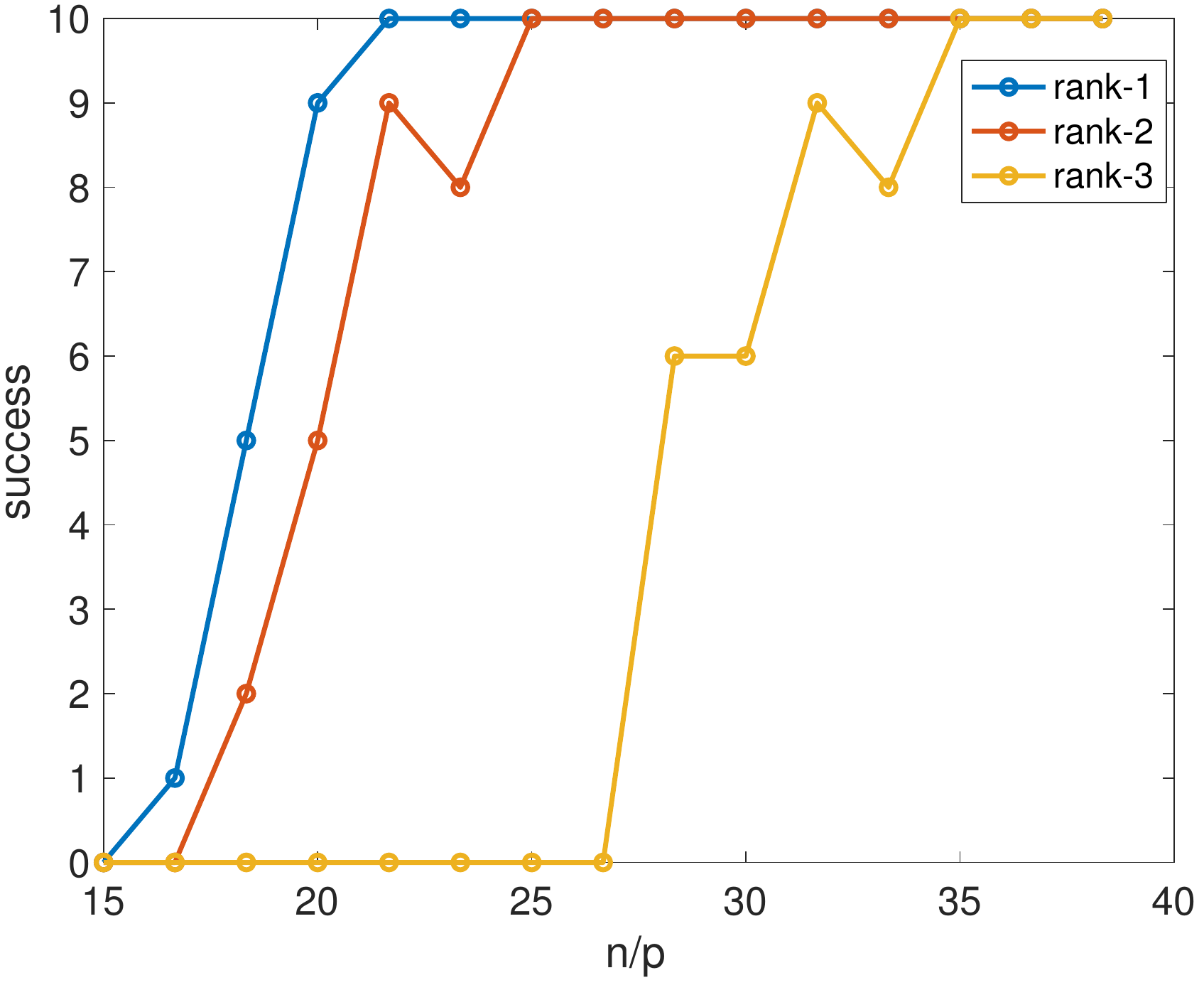}}
\subfigure[Poisson: cycle]{
\includegraphics[scale = 0.20]{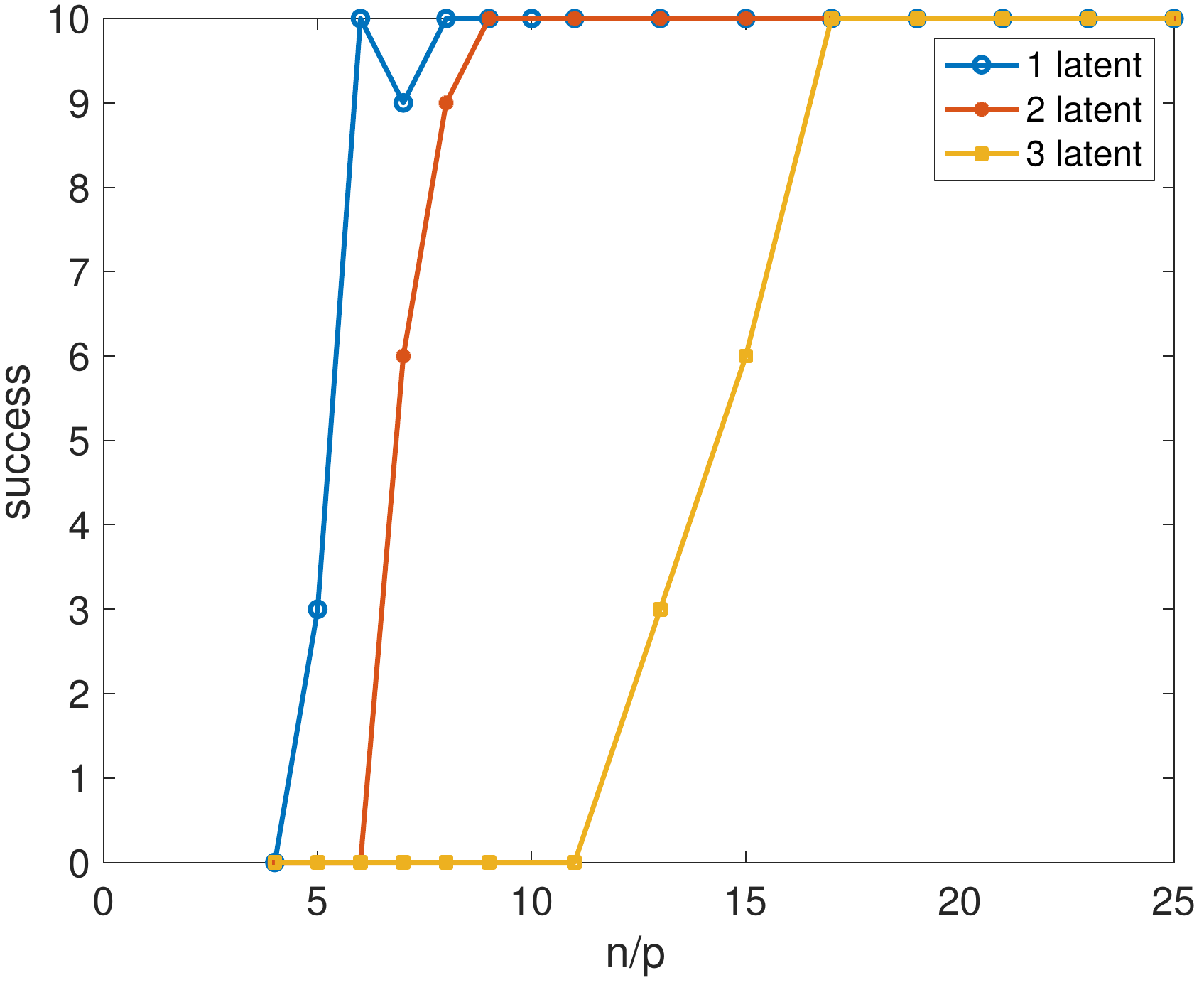}}
\subfigure[Poisson: E-R]{
\includegraphics[scale = 0.20]{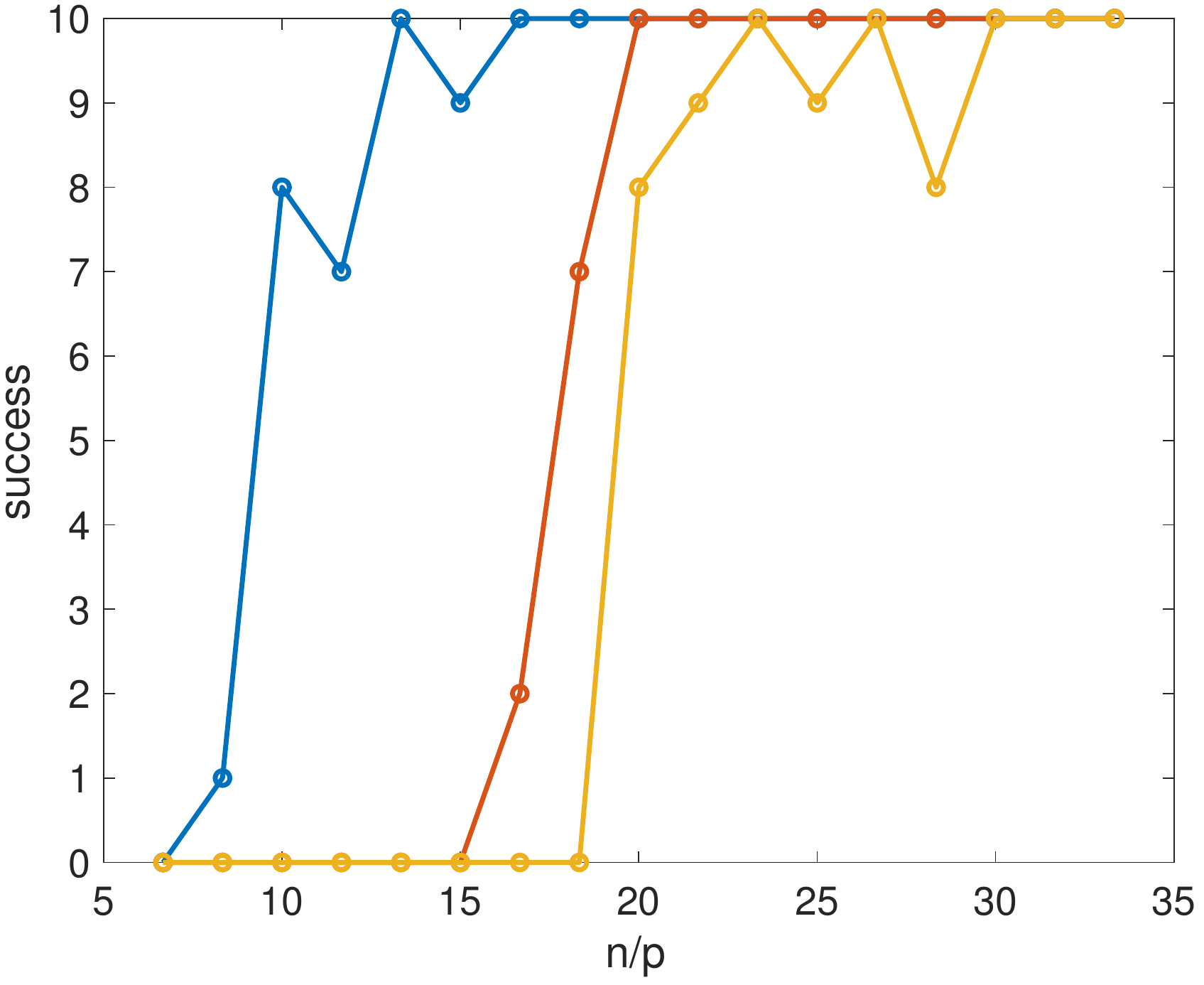}}
\subfigure[Exponential: cycle]{
\includegraphics[scale = 0.34]{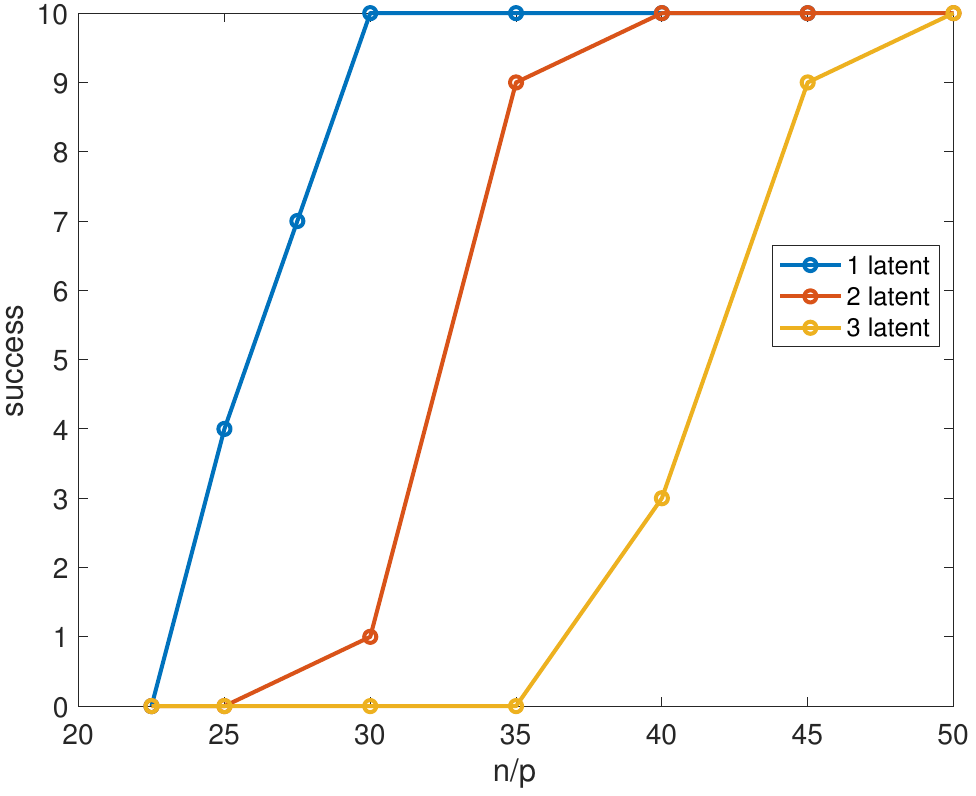}}
\subfigure[Exponential: E-R]{
\includegraphics[scale = 0.34]{exponential_consist.pdf}}
\caption{Probability of correcting identifying the population graphical model structure and the number of latent variables (computed empirically across 10 trials) with the estimators \eqref{eq:gaussianequiv} and \eqref{eq:pseudoclvgm} for cycle and Erd\"os-R\'{e}nyi graph and $r = 1,2,3$ latent variables.}
\label{fig:consist}
\end{figure}
\FloatBarrier
%\textit{Impact of graph degree and latent incoherence}:The accuracy of the estimator Equation \eqref{eq:pseudoclvgm} is dependent on the degree of the population graph and the incoherence of the latent effects. This phenomena has been observed in previous works for graphical modeling in the presence of latent variables [CITE]. In this experiment, we aim to understand the extent to which the degree of the graph 

\section{Model Selection}
\label{sec:model_sel}
The selection of the regularization parameters $\lambda,\gamma$ in \eqref{eq:clvgm} (as well as its specializations and approximations \eqref{eq:gaussianclvgm} and \eqref{eq:pseudoclvgm}) is an important consideration in obtaining a useful model.  Standard approaches such as cross-validation tend to yield overly complex models that can overfit to the data \cite{Meinhausen} (in our context, such models correspond to those in which the graph structure is dense and the number of latent variables is large).  To address this issue, several methods have been proposed in the literature based on a notion of \emph{stability} \cite{Wasserman,stability}, in which a model selection procedure is applied to subsamples of a dataset and the variability of the resulting solutions over the subsamples governs the identification of a suitable regularization parameter or model structure.  By combining the ideas in \cite{Wasserman,stability}, we present a model selection technique that is suited to the context of the present paper in Section~\ref{section:stability}.  We demonstrate the utility of these techniques via numerical experiments in Section~\ref{section:experimental_model}.

%in the context of sparse model selection

\subsection{Model Selection via Stability}
\label{section:stability}
To assess the variability of the structure of the selected models over subsamples, we need an appropriate method to aggregate the models selected over subsamples.  Concretely, let $\mathcal{D} = \{x^{(k)}\}_{k=1}^n$ be the given dataset and consider $B$ subsamples $\mathcal{D}^{(l)} \subset \mathcal{D}, ~ l=1,\dots,B$ with $|\mathcal{D}^{(l)}| = |\mathcal{D}| / 2$.  Fix regularization parameters $\lambda,\gamma$, and let $(\hat{\Theta}_{\lambda, \gamma}^{(l)}, \hat{L}_{\lambda, \gamma} ^{(l)}), ~l=1,\dots,B$ represent the optimal solutions obtained from \eqref{eq:gaussianclvgm} or \eqref{eq:pseudoclvgm} of the $B$ subsamples (for the purposes of model structure aggregation, $\alpha$ plays no role).  To represent the variability in the graphical model structure across these optimal solutions, form a diagonal matrix $\Proj_{\lambda, \gamma}^{\mathrm{graph}} \in \Sym^{{d \choose 2}}$ as follows:
\begin{equation*}
\left(\Proj_{\lambda, \gamma}^{\mathrm{graph}} \right)_{e,e} = \frac{1}{B} \sum_{l=1}^B \mathbb{I}\left(e \in \mathrm{support}(\hat{\Theta}_{\lambda, \gamma}^{(l)}) \right).
\end{equation*}
Here $e$ ranges over the ${d \choose 2}$ edges and $\mathbb{I}$ denotes the indicator function that equal $1$ if its argument is true and $0$ otherwise.  In words, the diagonal entries of $\Proj_{\lambda, \gamma}^{\mathrm{graph}}$ lie in $[0,1]$ and they encode the frequencies of the edges appearing in the selected models aggregated over the subsamples.  The methods presented in \cite{Wasserman,stability} may be described in terms of this matrix, and they were applicable to discrete model selection problems such as graph estimation and variable selection.  These ideas were extended recently to low-rank estimation problems based on a geometric reformulation of model selection \cite{false_discovery}, with a key ingredient being a suitable generalization of the aggregate matrix $\Proj_{\lambda, \gamma}^{\mathrm{graph}}$.  Specifically, for each $\hat{L}_{\lambda, \gamma}^{(l)}$ let $\Proj_{\lambda,\gamma}^{(l)} \subset \mathbb{S}^d$ denote the projection operator onto the column-space of $\hat{L}_{\lambda, \gamma}^{(l)}$.  With this notation, the variability in the structure underlying the low-rank estimates across subsamples is specified by the following average projection map:
\begin{equation*}
\Proj_{\lambda,\gamma}^{\mathrm{latent}} = \frac{1}{B} \sum_{l=1}^B \Proj_{\lambda,\gamma}^{(l)}
\end{equation*}
The eigenvalues of $\Proj_{\lambda, \gamma}^{\mathrm{latent}}$ lie in the range $[0,1]$.  We describe next the three main steps of our model selection approach.

%Our methods are based on an appropriate notion of stability in which we fix $\lambda,\gamma$ and consider the stability of the graph structure (the nonzero pattern of $\Theta$) and the influence of the latent variables (the column-space of $L$) ...

\paragraph{Stage 1: Identifying Regularization Parameters}  The first step is to select appropriate values of $\lambda,\gamma$.  Building on the insights of \cite{Wasserman}, let $\pi^{\mathrm{graph}}_{\lambda,\gamma} = \tfrac{1}{{d \choose 2}}[\mathrm{trace}(\Proj_{\lambda,\gamma}^{\mathrm{graph}}) - \mathrm{trace}(\Proj_{\lambda,\gamma}^{\mathrm{graph}})^2]$ and $\pi^{\mathrm{latent}}_{\lambda,\gamma} = \tfrac{1}{d} [\mathrm{trace}(\Proj_{\lambda,\gamma}^{\mathrm{latent}}) - \mathrm{trace}(\Proj_{\lambda,\gamma}^{\mathrm{latent}})^2]$ denote the total variabilities in the graphical and latent components, respectively.  These parameters lie in the range $[0,1]$, and they are small when the graph structure and the latent subspace are stable across subsamples.  For sufficiently large values of $\lambda,\gamma$, the graph structure is completely disconnected and the latent subspace is zero-dimensional; correspondingly, $\pi^{\mathrm{graph}}_{\lambda,\gamma}$ and $\pi^{\mathrm{latent}}_{\lambda,\gamma}$ are both zero.  As $\lambda,\gamma$ are gradually decreased, more edges and higher-dimensional subspaces are progressively included in the recovered graph structure and latent components, and the values of $\pi^{\mathrm{graph}}_{\lambda,\gamma}$ and $\pi^{\mathrm{latent}}_{\lambda,\gamma}$ begin to increase.  When these values reach a desired user-specified threshold, the corresponding $\lambda,\gamma$ are set as the regularization parameters.  (Typical threshold values for $\pi^{\mathrm{graph}}_{\lambda,\gamma}$ and for $\pi^{\mathrm{latent}}_{\lambda,\gamma}$ are $0.025$, thus yielding a total variability of $0.05$ as recommended in \cite{Wasserman}).  This approach provides regularization parameters for which the associated graphical model and latent subspace are sparse/low-dimensional, while exhibiting little overall variability across subsamples.

\paragraph{Stage 2: Identifying Model Structure} Solving \eqref{eq:gaussianclvgm} or \eqref{eq:pseudoclvgm} with the regularization parameters obtained from the preceding step tends to lead to models that have small type-II error (formally \cite{Wasserman} shows that type-II error in graph structure estimation is small under minimal assumptions).  However, to also reduce type-I error it is useful to further restrict the models selected based on a more refined form of stability, as described in \cite{stability,false_discovery}.  Specifically, while the approach of \cite{Wasserman} considers aggregate variability, the methods in \cite{stability,false_discovery} suggest selecting a graphical model structure and a latent subspace that are common to a large proportion of the subsamples.  Concretely, let $\Proj^{\mathrm{graph}}$ and $\Proj^{\mathrm{latent}}$ represent the average projections over subsamples for the values of the regularization parameters chosen from the previous step (we have suppressed the dependence on $\lambda,\gamma$ for notational clarity).  For the graphical model component, we select those edges corresponding to all those elements on the diagonal of $\Proj^{\mathrm{graph}}$ that are above a user-specified threshold $\delta^{\mathrm{graph}} \in [0,1]$; for large values of $\delta^{\mathrm{graph}}$, these correspond to edges that are chosen in a large proportion of the subsamples.  For the latent subspace component, we select the largest-dimensional subspace $\mathcal{C} \in \R^d$ such that $\sigma_{\mathrm{min}}(\Proj_{\mathcal{C}} \Proj^{\mathrm{latent}} \Proj_{\mathcal{C}}) \geq \delta^{\mathrm{latent}}$.  Here $\delta^{\mathrm{latent}} \in [0,1]$ is again user-specified, $\Proj_{\mathcal{C}} \in \mathbb{S}^d$ denotes the projection onto $\mathcal{C}$, and $\sigma_{\mathrm{min}}(\Proj_{\mathcal{C}} \Proj^{\mathrm{latent}} \Proj_{\mathcal{C}})$ is the smallest singular value of the operator $\Proj_{\mathcal{C}} \Proj^{\mathrm{latent}} \Proj_{\mathcal{C}}$ viewed as a self-adjoint map on $\mathcal{C}$.  Selecting such a subspace may be accomplished by a singular value decomposition of $\Proj^{\mathrm{latent}}$, and for large values of $\delta^{\mathrm{latent}}$, the selected subspace is one that well-aligned with the subspaces that are chosen in a large proportion of the subsamples.  (A typical recommended value for both $\delta^{\mathrm{graph}}, \delta^{\mathrm{latent}}$ is $0.7$, as suggested in \cite{stability,false_discovery}).  As shown in \cite{stability} for sparse models and in \cite{false_discovery} for low-rank models, such stability-based approaches yield models with small type-I error.

\paragraph{Stage 3: Identifying Model Parameters} The output of the preceding step is a stable subset of edges $\mathcal{E}$ for the graphical model and a stable column-space $\mathcal{C}$ for the latent component.  With these in hand, we solve either \eqref{eq:gaussianclvgm} or \eqref{eq:pseudoclvgm} with two modifications.  First, we add the constraint that $\Theta$ must lie in the subspace of matrices in which the entries indexed by $\mathcal{E}^c$ equal zero and the constraint that $L$ must lie in the subspace of matrices in which each column lies in $\mathcal{C}$.  Second, we set the regularization parameters $\lambda = \gamma = 0$ as these are no longer required to obtain low-complexity models.  Even with these modifications \eqref{eq:gaussianclvgm} and \eqref{eq:pseudoclvgm} continue to be tractable convex optimization problems.

\subsection{Experimental Demonstration}
\label{section:experimental_model}
\vspace{0.05in}
We provide empirical demonstration of the utility of the model selection method presented above in terms of the false discovery rate (FDR) and true positive rate (PWR) of the estimated graph structure; the FDR is the expected ratio of the number of estimated edges that are not in the true underlying graph over the total number of estimated edges and the PWR is expected ratio of the number of estimated edges that are in the true underlying graph over the total number of estimated edges.

We consider the setting where the conditional graphical model of $50$ observed variables conditioned on two independent normally distributed latent variables is an Ising model, with the population graphical structure being an Erd\"os-R\'enyi graph with edge selection probability $0.02$ and edge weights $0.4$. The coefficient matrix $B \in \R^{50 \times 2}$ is a random partial orthogonal matrix sampled uniformly from the Haar measure.  We obtain observations (via Gibbs sampling) and compute the FDR and PWR over $10$ trials based on the above problem setup using the estimator \eqref{eq:pseudoclvgm} with $\rho = \rho_{\mathrm{ising}}$. Figure~\ref{fig:model_selection_ising} demonstrates the graph recovery performance after employing the first stage of our model selection approach as well as combining both the first and second stages. Notice that for moderate $n$, the first stage of the algorithm yields a graphical structure with $\text{PWR} \approx 1$ but also high $\text{FDR}$ (i.e. many false positives). After the second stage, we substantially reduce FDR without much loss in power.  These results provide empirical support for the utility of our two-stage model selection method, and in particular the fact that combing both stages yields graphical models that have small Type-I error as well as small Type-II error.
%\begin{SCfigure}
\begin{figure}[ht!]
\centering
\includegraphics[scale = 0.3]{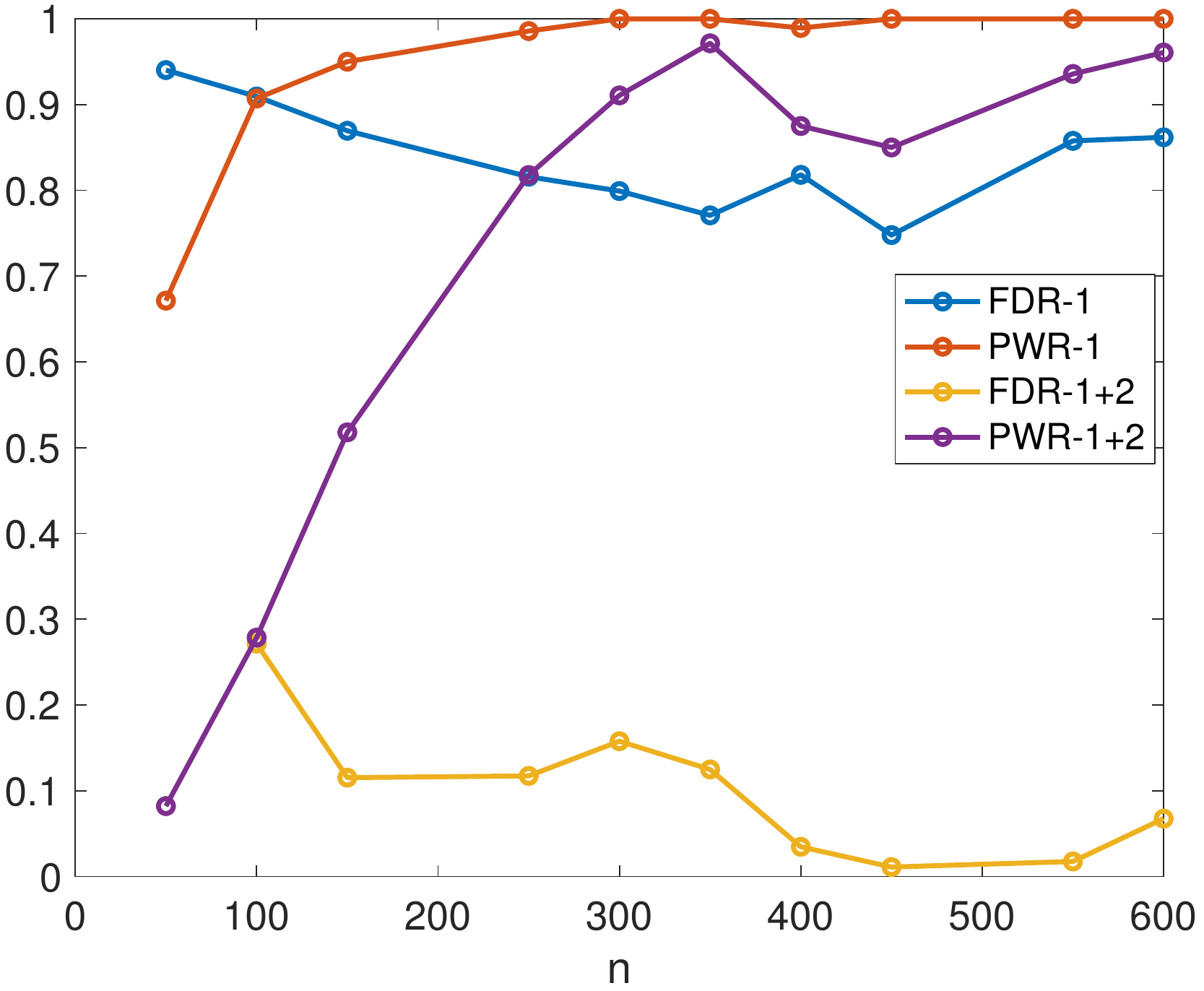}
\caption{False discovery rate (FDR) and true positive rate (PWR) as a function of $n$ after the first stage of the proposed model selection procedure, denoted by `FDR-1' and `PWR-1', and after the second stage of the model selection procedure, denoted by `FDR-1+2' and `PWR-1+2'.}
\label{fig:model_selection_ising}
%\end{SCfigure}
\end{figure}

\section{Experiments with Real Data}
\label{sec:real}
In this section, we demonstrate the utility of our latent-variable graphical modeling framework on US Senate voting records data, mi-RNA sequence data, and S$\&$P-500 stock data.  We provide comparisons between the graphical structure obtained via our approach and graphical models that do not incorporate latent variables.  In addition to examining the difference between the graphical structure of the approach with latent variables to the one without, we also provide quantitative comparison of their prediction performance.  Specifically, let $\mathcal{D}_\text{train} = \{x^{(k)}\}_{k = 1}^{n_\text{train}}$ and $\mathcal{D}_\text{test} = \{x^{(i)}\}_{k = 1}^{n_\text{train}}$ denote training and test datasets, respectively.  Using \eqref{eq:gaussianclvgm} and \eqref{eq:pseudoclvgm}, let $(\hat{\alpha}_\text{latent}, \hat{\Theta}_{\text{latent}}, \hat{L}_\text{latent})$ denote the estimated parameters based on our approach that incorporates latent variables.  Next, let $(\hat{\alpha}_\text{no-latent}, \hat{\Theta}_{\text{no-latent}})$ denote the estimated parameters based on fixing $L = 0$ so that latent variables are not incorporated.  In obtaining these models, we employ the model selection technique described in Section~\ref{sec:model_sel} with the various thresholds chosen as stated in the corresponding stages; the one distinction is that when we fix $L=0$ to obtain a graphical model that does not incorporate latent variables, we fix the variability threshold for $\pi^{\text{graph}}_\lambda$ to be $0.05$.  We evaluate the prediction performance of the two models by comparing negative log (pseudo-)likelihood values on the test data $\mathcal{D}_\text{test}$.  These values are obtained by solving unregularized (i.e., $\lambda = \gamma = 0$) and suitably constrained versions of \eqref{eq:gaussianclvgm} and \eqref{eq:pseudoclvgm}.  In particular, for the setting with latent variables, we consider the optimal values of these problems with the additional constraints $\alpha = \hat{\alpha}_\text{latent}, \Theta = \hat{\Theta}_{\text{latent}}, \text{col-space}(L) \subseteq \text{col-space}(\hat{L}_\text{latent})$, and for the setting with no latent variables we consider the optimal values with the constraints $\alpha = \hat{\alpha}_\text{no-latent}, \Theta = \hat{\Theta}_{\text{no-latent}}, L = 0$.

%We focus on the relaxation \eqref{eq:gaussianclvgm} for the non-Gaussian settings, although the description is applicable to the Gaussian relaxation \eqref{eq:gaussianclvgm}.

\subsection{Senate voting records data}
We apply our latent-variable modeling framework to the 109th Senate voting record dataset.   The dataset was obtained from the website of the US Congress (\url{http://www.
senate.gov}). It contains the voting records of the 100 senators  -- 55 Republicans, 44 Democrats, and one Independent -- of the 109th congress (January 3, 2005 — January 3, 2007) on 645 bills on which the Senate voted.  The votes are recorded as $+1$ for ``yes” and $-1$ for ``no”. The data contains missing votes as some senators abstained on a small number of bills. The  missing values (missed votes) for each senator were imputed with the majority vote of that senator’s party on that particular bill and the missing votes of the Independent Senator Jeffords were imputed with the Democratic majority vote (because he caucused with the Democrats). Finally, we exclude bills where the ``yes/no" proportion fell outside the interval $[0.3,0.7]$. This results in $n = 479$ votes across $d = 100$ senators to yield a dataset of $\mathcal{D} = \{x^{(k)}\}_{k = 1}^{479} \subset \{-1,+1\}^{100}$. We take the first $383$ samples as training set $\mathcal{D}_\text{train}$ and the remaining $96$ samples as test data $\mathcal{D}_\text{test}$. 

We fit Ising models with and without latent variables to this dataset.  We obtain a latent-variable graphical model with $r = 8$ latent variables and a conditional graphical model with the number of edges equal to $6\%$ of the total number of pairs of variables.  In contrast, the model without latent variables is given by a graphical model with edge density $\approx 24\%$.  The edge weights of the graph structure in the latent-variable graphical model are shown in the bottom half of Figure~\ref{fig:voter} and the edge weights of the graphical model without latent variables are shown in the top half of Figure~\ref{fig:voter}.  The majority of the interactions in the estimated edges in the graphical models occur between individuals in the same party.  The incorporation of latent variables substantially reduces the number of edges as many confounding dependencies are removed. Examining the positive interactions in the model with latent variables, the strongest edge among the Democrats is between senators Pryor-Lautenburg and among the Republicans is between senators Robert-Inhofe.  We observe that conditioning on the latent variables induces some negative dependencies between senators in the same party, notably Pryor-Baucus $\&$ Reed-Levin among the Democrats and Enzi-Coburn $\&$ Sessions-Cornyn among the Republicans.  Finally, the negative log (pseudo-)likelihoods evaluated on the test data (in the manner described above) yield values of $38.3$ without latent variables and $33.8$ with latent variables, suggesting that our approach which incorporates latent variables more accurately models Senate voting records.

%Lieberman-Kohl, and Stabenow-Bayh and among republicans are between Robert-Inhofe, Lugar-Hagel, and Talent-Robert. Further, the strongest three edges across parties are Leahy-Sessions, Dodd-Coleman, and Landrieu-Isakson.  Examining the negative edges, the strongest among democrats are Pryor-Baucus, Reed-Levin, and Reid-Reed. 

%\begin{SCfigure}
\begin{figure}
\centering
{
\includegraphics[scale = 0.44]{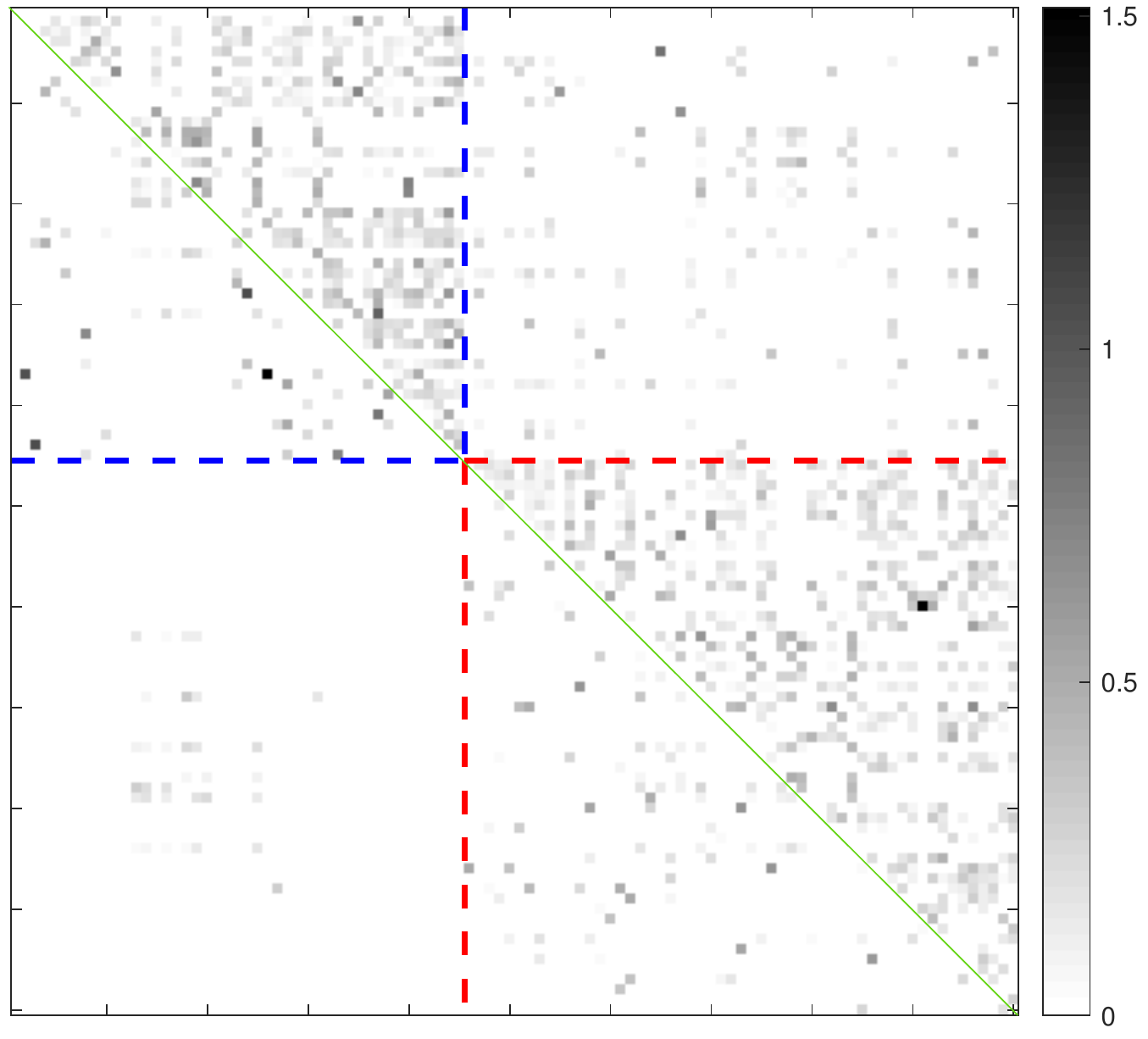}}
\caption{Edges between senator pairs in the graphical model with $8$ latent variables (bottom triangle) compared with those of the no latent variable estimate (tom triangle); the two parties zones are separated by red and blue lines: top left correspond to interactions among Democrats, and bottom right among Republicans; white indicates no edge.}
\label{fig:voter}
\end{figure}
%\end{SCfigure}

\subsection{mi-RNA sequence data}
Next, we demonstrate the utility of our approach in estimating an miRNA inhibitory network for Level III breast cancer miRNA expressions (downloaded from \url{http://tcga-data.nci.nih.gov/tcga/}). The data consist of $262$ miRNAs and $544$ subjects. Of the $262$ miRNAs, we extract $27$ that were considered by \cite{Genevera} (after a hierarchical clustering step) as their interactions are well modeled by negative dependencies (since \eqref{eq:poissonvalid}
only allows for negative dependencies or equivalently $\Theta$ non-negative). The data consisting of the selected $27$ miRNAs were adjusted for possible over-dispersion using a power transform \cite{LiuGen}.  After performing these pre-processing steps, we obtain training data $\mathcal{D}_\text{train} = \{x^{(k)}\}_{k = 1}^{544} \subseteq \mathbb{R}^{27}$ that is well-modeled by a Poisson distribution.

We fit Poisson graphical models with and without latent variables.  We obtain a latent-variable graphical model consisting of $r = 2$ latent variables and a conditional graphical model in which the number of edges is $3\%$ of the total number of pairs of variables.  The graphical model that does not incorporate latent variables has an edge density of $\approx 18\%$.  The corresponding graphs are displayed in the bottom triangle and the top triangle of Figure~\ref{fig:voter}, respectively. We observe that incorporating latent variables in the graphical model removes dependencies between pairs of miRNAs that have similar primary function. Specifically, the strongest edges in the graphical model without latent variables that are not part of the graphical model that incorporates latent variables are among the miRNAs `632' (promotes cell proliferation in carcinoma cancer) and `215' (early indicator of carcinoma cancer); `186' and `132' (both are colorectal cancer suppressants); and `374' and `9-1' (both are prostate cancer suppressants). Further, the majority of the edges in the graphical model with latent variables are among miRNAs that have different functionalities. Specifically, the five strongest edges in this graph are between the pairs: `449b' (breast cancer suppressant) and `577' (lung cancer suppressant); `192' (oncogene for prostate cancer) and `518c' (inhibits gastric cell growth); `449b' and `518c' (both are breast cancer suppressants), `449b' (breast cancer suppresant) and '143' (down-regulated in lung cancer); and '518c' (inhibits gastric cell growth) and '141' (biomarker in prostate cancer).  Of these five strongest edges, only the one linking `449b' and `518c' is between similar functioning miRNAs, and this edge is also present in the graphical model without latent variables.  In summary, these observations suggest that in the latent variable graphical model, the latent variables may correspond to commonalities in the biological functions of various miRNAs, and the associated confounding edges are not present in the conditional graphical model.  We observe a similar feature in the experimental results in the next subsection with stock return data.

\begin{figure}[ht!]
\centering
\includegraphics[scale = 0.51]{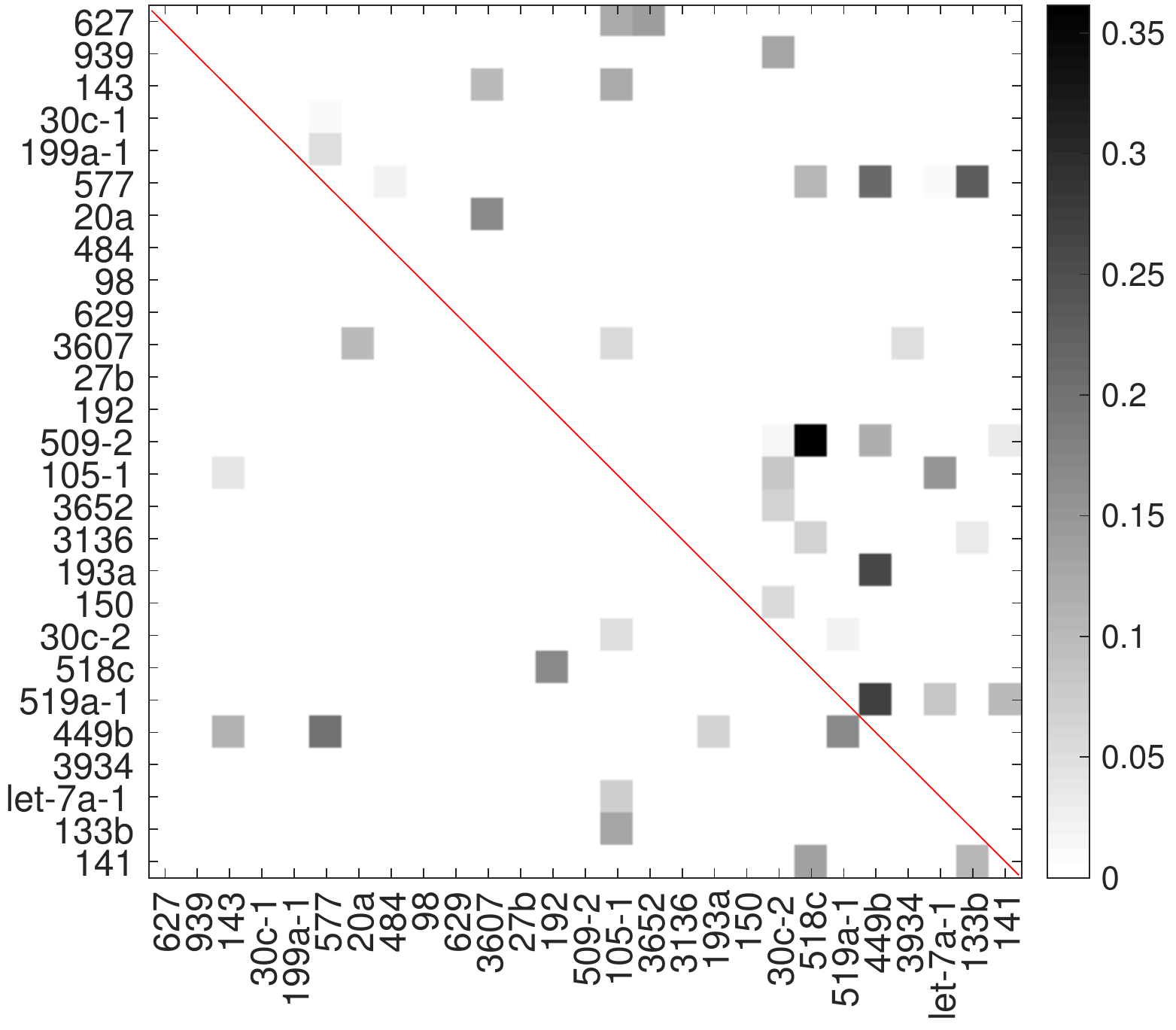}
\caption{Edges between miRNAs in the graphical model conditioned on two latent variables (bottom triangle) and in the graphical model without latent variables (top triangle).}
\label{fig:voter}
\end{figure}
%\end{SCfigure}

\subsection{Stock data}
We analyze monthly stock returns of $d = 40$ companies from the Standard
and Poor index over the period March 1982 to March 2016, which leads to a total of $n = 408$
observations. We set aside $n_\texttt{test} = 58$ observations as the test set, and the remaining $n_\texttt{train} = 350$ observations as the training set. In this experiment, we apply the convex relaxation \eqref{eq:gaussianclvgm} for fitting Gaussian graphical models conditioned on latent variables. 

We obtain a latent-variable graphical model with $r = 3$ latent variables and a conditional graphical model with edge density $\approx{3.8}\%$. The magnitudes of the partial correlations corresponding to this conditional graphical model are displayed in the bottom triangle of Figure~\ref{fig:stock}. The strongest five edges in this graph are between companies Andarko - Lowe, United Health - Intel, Bank of America - Cisco, IBM - Amgen, Fedex - Raytheon.  Note that in the Standard Industrial Classification system\footnote{See the U.S. SEC website at \url{http://www.sec.gov/info/edgar/siccodes.html}.} for grouping these companies, all of these pairs are in different classes. We also obtain a graphical model that does not incorporate latent variables, and the graph structure of this model has edge density $\approx 6.5\%$; the corresponding magnitudes of the partial correlations are shown in the top triangle of Figure~\ref{fig:stock}. In contrast to the previous model that incorporated latent variables, four of the five strongest edges in this graphical model without latent variables graph are between companies in the same category: Texas Instruments - Intel, HPQ - Microsoft, Wellsfargo - Bancorp, and
Boeing - General Dynamics. The negative log likelihoods (based on the procedure described previously in this section) evaluated on the test data yield values of $18.1$ for the model that incorporates latent variables as compared to $22.2$ for the model without latent variables, which suggests that accounting for the confounding effects of latent variables yields a better fit to stock data.

%\begin{SCfigure}
\begin{figure}
\centering
\includegraphics[scale = 0.57]{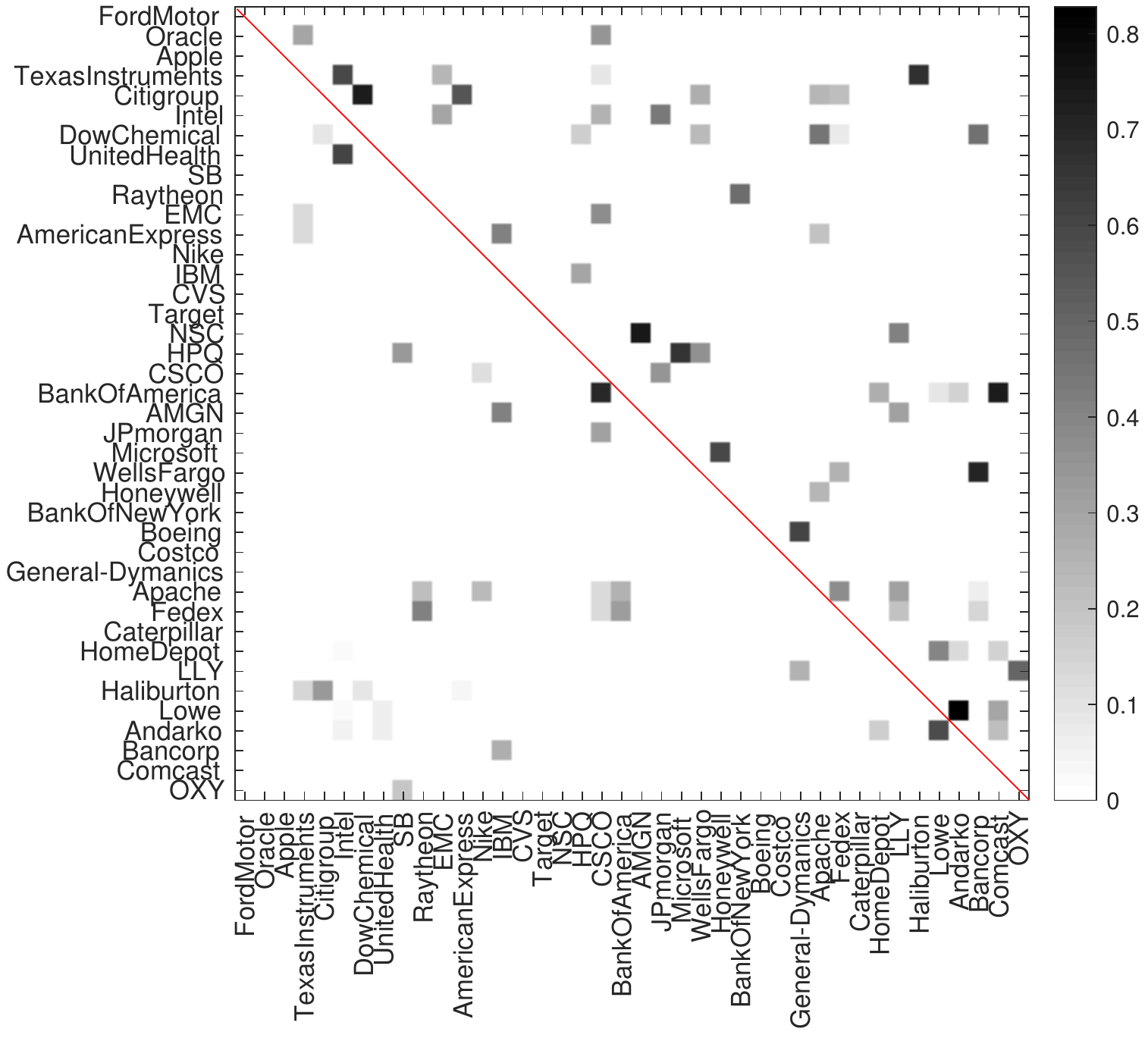}
%\subfigure[marginal approach - 3 latent variables]{
%\includegraphics[scale = 0.4]{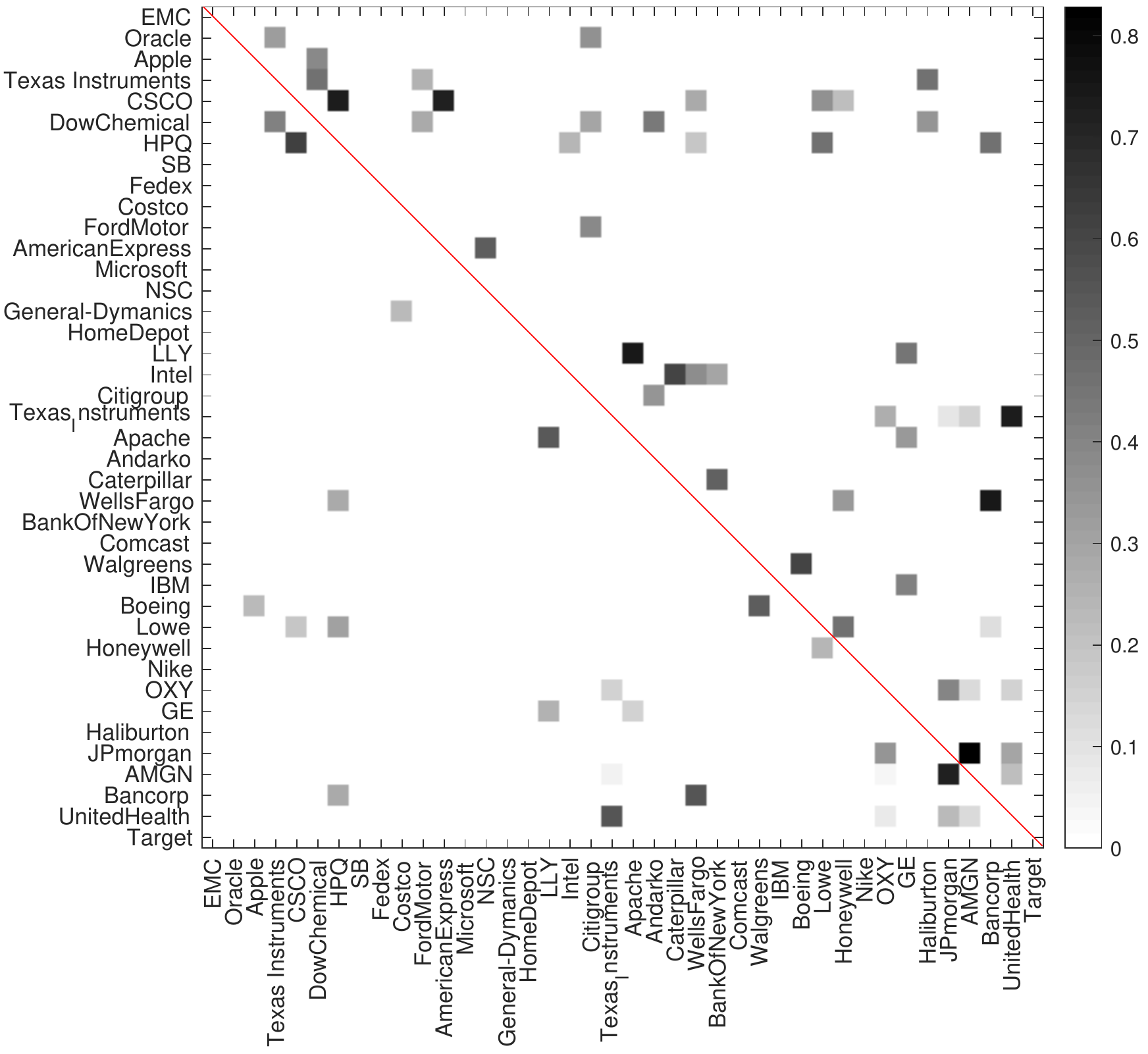}}
\caption{Bottom triangle: edges among pairs of companies obtained with the conditional likelihood procedure; top triangle: graphical model without incorporating latent variables; white indicates no edge.}
\label{fig:stock}
\end{figure}
%\end{SCfigure}

\section{Discussion}
\label{section:discussion}

In this paper, we describe a new convex relaxation framework for learning a latent variable graphical model given sample observations of a collection of variables.  Specifically, we fit the observations to a model in which the conditional distribution of the observed variables conditioned on latent variables is given by an exponential family graphical model \eqref{eq:expfam}.  Our approach is based on regularized conditional likelihood and we demonstrate the utility and flexibility of our method with both synthetic data as well as real data from a variety of problem domains.

There are several interesting directions for further investigation arising from our work that we outline below:

{\bf \par{Consistency of the estimators \eqref{eq:gaussianclvgm} and \eqref{eq:pseudoclvgm}}}: In Section~\ref{section:phase}, we presented empirical evidence that the estimators \eqref{eq:gaussianclvgm} and \eqref{eq:pseudoclvgm} consistently identify the structure of a latent variable graphical model in various settings and with several types of distributions.  Given this empirical demonstration, it would be of interest to provide theoretical support for the consistency of our method by leveraging prior work such as \cite{Chand2012}.

{\bf\par{Comparison of \cite{Chand2012} and estimator \eqref{eq:gaussianclvgm} for Gaussian graphical modeling:}} The authors in \cite{Chand2012} develop a convex relaxation for the problem of latent variable graphical modeling in settings with jointly Gaussian observed and latent variables. Their approach proceeds by considering the marginal distribution of the observed variables and explicitly characterizing the influence of the latent variables on the observed variables upon marginalization.  This characterization and the underlying assumption of joint Gaussianity are central to the derivation of the relaxation in \cite{Chand2012}.  In contrast, the derivation of our estimator \eqref{eq:gaussianclvgm} requires no knowledge of the distribution of the latent variables, and only assumes that the conditional distribution of the observed variables conditioned on the latent variables is given by a Gaussian graphical model.  This distinction suggests that the framework in this paper is more flexible and may be more robust to different distributions for the latent variables.  A natural question is to develop further theoretical and empirical understanding of comparative advantages of each approach for latent variable Gaussian graphical modeling.

{\bf\par{Theoretical support for the proposed model selection procedure:}} In Section~\ref{sec:model_sel}, we described a model selection procedure -- based on a notion of \emph{stability} -- for selecting regularization parameters $\lambda,\gamma$ in \ref{eq:gaussianclvgm} and in \ref{eq:pseudoclvgm} as well as for identifying suitable model structures.  Our approach is based on combining the ideas of \cite{Wasserman,stability} for the graphical model component and \cite{false_discovery} for the latent subspace. The empirical demonstrations in Section~\ref{section:experimental_model} suggest that our procedure provides good control over both Type-I and Type-II errors in estimating a population graphical model.  These are perhaps to be expected based on the theoretical analyses in \cite{stability,false_discovery,Wasserman}, and it would be useful to combine and formalize these results in our context by showing that the Type-I and Type-II errors can be provably controlled under appropriate assumptions.

{\bf\par{Better regularizers:}}
The nuclear norm regularizer in \eqref{eq:clvgm} (and its specializations) is agnostic to the type or form of the latent variables and only encourages low-rankness (i.e., few latent variables).  If a data analyst has access to additional information about potential latent variables (e.g., the latent variables take on non-negative or categorical values) or wishes to fit to models in which the latent variables have additional structure, one can design tighter convex regularizers than the nuclear norm \cite{ConvexGeom}.  As an example, if the latent variables take on binary values, a tighter regularizer than the nuclear norm is the max-2 norm.  Thus, an exciting direction is to investigate the computational and statistical tradeoffs underlying these tighter regularizers for latent variable graphical modeling.

{\bf\par{Tailored computational methods:}} We solve the convex program \eqref{eq:pseudoclvgm} via an ADMM procedure that we implemented ourselves \cite{ADMM}. The most costly component of this algorithm is computing a singular-value decomposition of $d \times n$ matrices, which can be prohibitive when the sample size or the number of variables are large.  For the convex program \eqref{eq:gaussianequiv}, we use the off-the-shelf logDetPPA solver \cite{Toh} and it tends to be prohibitively expensive beyond $d \approx 500$ on standard contemporary workstations.  Fast solvers for the graphical Lasso (and its variants) that can handle up to tens of thousands and sometimes millions of variables by exploiting problem-specific structure have been proposed previously \cite{Ravikumar_million,Ma}.  Designing similar custom solvers for the relaxations \eqref{eq:pseudoclvgm} and \eqref{eq:gaussianequiv} proposed in this paper would enable a broader application of our methods in problems with a large number of variables. 

\bibliographystyle{siam} % Style BST file (imsart-number.bst or imsart-nameyear.bst)
\bibliography{example}

\end{document}